\newtheorem{lemma}{Lemma}
\newtheorem{theorem}{Theorem}
\newtheorem{assumption}{Assumption}
\newtheorem{remark}{Remark}
\DeclareMathOperator{\argmax}{argmax} 
\DeclareMathOperator{\argmin}{argmin} 
\DeclareMathOperator{\rank}{rank}
\title{Fast and Robust Recursive Algorithms \\ for Separable Nonnegative Matrix Factorization\thanks{This work was supported in part by a grant from the U.S. Air Force Office of Scientific Research and a Discovery Grant from the Natural Science and Engineering Research Council (Canada).}}
\date{}
\author{Nicolas Gillis \\ 
Department of Mathematics and Operational Research  \\ 
       Facult\'e Polytechnique, Universit\'e de Mons \\
       Rue de Houdain 9, 7000 Mons, Belgium \\ 
Email: nicolas.gillis@umons.ac.be 
 \and 
 Stephen A. Vavasis \\ 
Department of Combinatorics and Optimization \\ 
University of Waterloo \\
Waterloo, Ontario N2L 3G1, Canada \\ 
Email: vavasis@math.uwaterloo.ca}
\begin{document}

\maketitle

\begin{abstract} 
 In this paper, we study the nonnegative matrix factorization problem under the separability assumption (that is, there exists a cone spanned by a small subset of the columns of the input nonnegative data matrix containing all columns), which is equivalent to the hyperspectral unmixing problem under the linear mixing model and the pure-pixel assumption. We present a family of fast recursive algorithms, and prove they are robust under any small perturbations of the input data matrix. This family generalizes several existing hyperspectral unmixing algorithms hence provide for the first time a theoretical justification of their better practical performances.   
\end{abstract} 

\textbf{Keywords.} hyperspectral unmixing, linear mixing model, pure-pixel assumption, nonnegative matrix factorization, algorithms, separability, robustness.

\section{Introduction}


A hyperspectral image consists of a set of images taken at different wavelengths. It is acquired by measuring the spectral signature of each pixel present in the scene, that is, by measuring the reflectance (the fraction of the incident electromagnetic power that is reflected by a surface at a given wavelength) of each pixel at different wavelengths. 
One of the most important tasks in hyperspectral imaging is called unmixing. It requires the identification of the constitutive materials present in the image and estimation of their abundances in each pixel.  
The most widely used model is the \emph{linear mixing model}: the spectral signature of each pixel results from the additive linear combination of the spectral signatures of the constitutive materials, called endmembers,  where the weights of the linear combination correspond to the abundances of the different endmembers in that pixel. 

More formally, let the $m$-by-$n$ matrix $M$ correspond to a hyperspectral image with $m$ spectral bands and $n$ pixels, and where each entry $M_{ij} \geq 0$ of matrix $M$ is equal to the reflectance of the $j$th pixel at the $i$th wavelength. Hence, each column $m_j$ of $M$ corresponds to the spectral signature of a given pixel. 
Assuming the image contains $r$ constitutive materials whose spectral signatures are given by the vectors $w_k \in \mathbb{R}^m_+$ $1 \leq k \leq r$, we have, in the noiseless case,  
\[
m_j \; = \; \sum_{k=1}^r w_k \, h_{kj}, \quad \text{ for } j=1,2,\dots,n, 
\]
where $h_{kj} \geq 0$ is the abundance of the $k$th endmember in the $j$th pixel, with $\sum_{k=1}^r h_{kj} = 1$ $\forall j$. 
Defining the $m$-by-$r$ matrix $W = [w_1 \, w_2 \, \dots w_k] \geq 0$ and the $r$-by-$n$ matrix $H$ with $H_{kj} = h_{kj}$ $\forall j,k$, the equation above can be equivalently written as $M = WH$ where $M$, $W$ and $H$ are nonnegative matrices.  
 Given the nonnegative matrix $M$, hyperspectral unmixing amounts to recovery of the endmember matrix $W$ and the abundance matrix $H$. 
 This inverse problem corresponds to the nonnegative matrix factorization problem (NMF), which is a difficult~\cite{V09} and highly ill-posed problem~\cite{G12}. 
 
 However, if we assume that, for each constitutive material, there exists at least one pixel containing only that material (a `pure' pixel), then the unmixing problem can be solved in polynomial time: it simply reduces to identifying the vertices of the convex hull of a set of points. 
 This assumption, referred to as the \emph{pure-pixel assumption}~\cite{C94}, is essentially equivalent to the separability assumption~\cite{DS03}: a nonnegative matrix $M$ is called separable if it can be written as $M = WH$ where 
 each column of $W$ is equal, up to a scaling factor, to a column of $M$. In other words, there exists a cone spanned by a small subset of the columns of $M$ containing all columns (see Section~\ref{secass} for more details). 
 It is worth noting that this assumption also makes sense for other applications. For example, in text mining, each entry $M_{ij}$ of matrix $M$ indicates the `importance' of word $i$ in document $j$ (e.g., the number of appearances of word $i$ in text $j$). The factors $(W,H)$ can then be interpreted as follows: the  columns of $W$ represent the topics (i.e., bags of words) while the columns of $H$ link the documents to these topics. Therefore, 
\begin{itemize}
\item Separability of $M$ (that is, each column of $W$ appears as a column of $M$) requires that, for each topic, there exists at least one document discussing only that topic (a `pure' document). 
\item Separability of $M^T$ (that is, each row of $H$ appears as a row of $M$) requires that, for each topic, there exists at least one word used only by that topic (a `pure' word). 
 \end{itemize}
 These assumptions often make sense in practice and are actually part of several existing document generative models, see \cite{AGKM11, AGM12} and the references therein. 

\subsection{Previous Work} \label{pwork}
 
  We focus in this paper on hyperspectral unmixing algorithms under the linear mixing model and the pure-pixel assumption, or, equivalently, to nonnegative matrix factorization algorithms under the separability assumption. 
  Many algorithms handling this situation have been developed by the  remote sensing community, see \cite{BP12} for a comprehensive overview of recent hyperspectral unmixing algorithms.   Essentially, these algorithms amount to identifying the vertices of the convex hull of the (normalized) columns of $M$, or, equivalently, the extreme rays of the convex cone generated by the columns of $M$.  However, as far as we know, none of these algorithms have been proved to work when the input data matrix $M$ is only approximately separable (that is, the original separable matrix is perturbed with some noise), and many algorithms are therefore not robust to noise. However, there exists a few recent notable exceptions: 
 \begin{itemize}

 \item  Arora et al.\@ \cite[Section 5]{AGKM11} proposed a method which requires the resolution of $n$ linear programs in $\mathcal{O}(n)$ variables ($n$ is the number of columns of the input matrix), and is  therefore not suited to dealing with large-scale real-world problems. In particular, in hyperspectral imaging, $n$ corresponds to the number of pixels in the image and is of the order of $10^6$. Moreover, it needs several parameters to be estimated a priori (the noise level, and a function of the columns of $W$; see Section~\ref{compa}).

 \item Esser et al.\@ \cite{EMO12} proposed a convex model with $n^2$ variables (see also \cite{ESV1294} where a similar approach is presented), which is computationally expensive. In order to deal with a large-scale real-world hyperspectral unmixing problem, the authors had to use a preprocessing, namely $k$-means, to select a subset of the columns in order to reduce the dimension $n$ of the input matrix. Their technique also requires a parameter to be chosen in advance (either the noise level, or a penalty parameter balancing the importance between the approximation error and the number of endmembers to be extracted), only applies to a restricted noise model, and cannot deal with repeated columns of $W$ in the data set (i.e., repeated endmembers).

 \item  Bittorf et al.\@ \cite{BRRT12} proposed a method based on the resolution of a single convex optimization problem in $n^2$ variables (cf.\@ Section~\ref{compaB}). In order to deal with large-scale problems ($m~\sim~10^6$, $n~\sim~10^5$), a fast incremental gradient descent algorithm using a parallel architecture is implemented. However, the algorithm requires several parameters to be tuned, and the factorization rank has to be chosen a priori. Moreover, it would be impractical for huge-scale problems (for example for web-related applications where $n \sim 10^9$), and the speed of convergence could be an issue.

   \end{itemize}

   \subsection{Contribution and Outline of the Paper} 
   
  In this paper, we propose a new family of recursive algorithms for nonnegative matrix factorization under the separability assumption. They have the following features: 
  \begin{itemize} 
  \item They are robust to noise (Theorem~\ref{threc}).  
  
  \item They are very fast, running in approximately $6mnr$ floating point operations, while the memory requirement is low, as only one $m$-by-$n$ matrix has to be stored.  
  
  \item They are extremely simple to implement and would be easily parallelized.     
  
  \item They do not require any parameter to be chosen a priori, nor to be tuned. 
  
  \item The solution does not need to be recomputed from scratch when the factorization rank is modified, as the algorithms are recursive. 
  
  \item A simple post-processing strategy allows us to identify outliers (Section~\ref{outliers}).  
  
  \item Repeated endmembers are not an issue. 
  
  \item Even if the input data matrix $M$ is not approximately separable, they identify $r$ columns of $M$ whose convex hull has large volume (Section~\ref{l2norm}). 
  
   \end{itemize}
   To the best of our knowledge, no other algorithms share all these desirable properties. 
   The weak point of our approach is that the bound on the noise to guarantee recovery    is weaker than in~\cite{AGKM11, BRRT12}; see Section~\ref{compa}. 
   Also, we will need to assume that the matrix $W$ is full rank, which is not a necessary condition for the approaches above \cite{AGKM11, EMO12, BRRT12}. However, in practice, this condition is satisfied in most cases. At least, it is always assumed to hold in hyperspectral imaging and text mining applications, otherwise the abundance matrix $H$ is typically not uniquely determined; see Section~\ref{secass}. Moreover, in Section~\ref{compaB}, our approach will be shown to perform  in average better  than the one proposed in \cite{BRRT12} on several synthetic data sets.

 The paper is organized as follows. In Section~\ref{algo1sec}, we introduce our approach and derive an a priori bound on the noise to guarantee the recovery of the pure pixels. 
  In Section~\ref{outliers}, we propose a simple way to handle outliers. 
 In Section~\ref{choicef}, we show that this family of algorithms generalizes several hyperspectral unmixing algorithms, including the successive projection algorithm (SPA)~\cite{MC01},  the automatic target generation process (ATGP)~\cite{RC03}, the successive volume maximization algorithm (SVMAX)~\cite{CM11}, and the $p$-norm based pure pixel algorithm (TRI-P)~\cite{A11}. 
 Therefore, our analysis gives the first theoretical justification of the better performances of this family of algorithms compared to algorithms based on locating pure pixels using linear functions (such as the widely used PPI \cite{B94} and VCA \cite{ND05} algorithms) which are not robust to noise. This was, until now, only experimentally observed. Finally, we illustrate these theoretical results on several synthetic data sets in Section~\ref{ne}.  \\

\noindent \textbf{Notation.} Given a matrix $X$, $x_k$, $X_{:k}$ or $X(:,k)$  denotes its $k$th column, and $X_{ik}$, $x_{ik}$ or $x_k(i)$ the entry at position $(i,k)$ ($i$th entry of column $k$). For a vector $x$, $x_i$ or $x(i)$ denotes the $i$th entry of $x$. 
The unit simplex in dimension $n$ is denoted $\Delta^n = \{ x \in \mathbb{R}^n \ | \ x \geq 0, \sum_{i=1}^n x_i \leq 1\}$. 
We use the MATLAB notation $[A, \; B] = \left( \begin{array}{cc} A & B \\ \end{array} \right)$ and $[A; B] = \binom{A}{B}$. 
Given a matrix $W \in \mathbb{R}^{m \times r}$, $m \geq r$, we denote $\sigma_i(W)$ the singular values of $W$ in non-decreasing order, that is, 
\[
\sigma_1(W) \geq \sigma_2(W) \geq \dots \geq \sigma_r(W) \geq 0. 
\] 
The $m$-by-$n$ all-zero matrix is denoted $0_{m \times n}$ while the $n$-by-$n$ identity matrix is denoted $I_{n}$ (the subscripts $m$ and $n$ might be discarded if the dimension is clear from the context).

\section{Robust Recursive NMF Algorithm under Separability} \label{algo1sec}

In this section, we analyze a family of simple recursive algorithms for NMF under the separability assumption; see Algorithm~\ref{sepnmf}.  
\algsetup{indent=2em}
\begin{algorithm}[ht!]
\caption{Recursive algorithm for separable NMF \label{sepnmf}}
\begin{algorithmic}[1] 
\REQUIRE Separable matrix $M = WH \in \mathbb{R}^{m \times n}_+$ (see Assumption~\ref{asssep}),  the number $r$ of columns to be extracted, and a strongly convex function $f$ satisfying Assumption~\ref{fass1}. 
\ENSURE Set of indices $J$ such that $M(:,J) = W$ up to permutation; cf.\@ Theorems~\ref{th1}~and~\ref{threc}.
    \medskip 
\STATE Let $R = M$, 
$J = \{\}$, $j=1$.  
\WHILE {$R \neq 0$ and $j \leq r$}   
\STATE $j^* = \argmax_j f(R_{:j})$.  $\dagger$
\STATE $u_j = {R_{:j^*}}$.  \vspace{0.1cm} 
\STATE $R \leftarrow \left(I-\frac{u_j u_j^T}{||u_j||_2^2}\right)R$. \vspace{0.1cm} 
\STATE $J = J \cup \{j^*\}$. 
\STATE $j = j+1$.
\ENDWHILE
\end{algorithmic}
$\dagger$ In case of a tie, we pick an index $j$ such that the corresponding column of the original matrix $M$ maximizes $f$.  
\end{algorithm} 
Given an input data matrix $M$ and a function $f$, it works as follows: at each step, the column of $M$ maximizing the function $f$ is selected, and $M$ is updated by projecting each column onto the orthogonal complement of the selected column.   

\begin{remark}[Stopping criterion for Algorithm~\ref{sepnmf}] Instead of fixing a priori the number $r$ of columns of the input matrix to be extracted, it is also possible to stop the algorithm whenever the norm of the residual (or of the last extracted column) is smaller than some specified threshold. 
\end{remark} 

In Section~\ref{secass}, we discuss the assumptions on the input separable matrix $M = WH$ and the function $f$ that we will need in Section~\ref{noiseless} to prove that Algorithm~\ref{sepnmf} is guaranteed to recover columns of $M$ corresponding to columns of the matrix $W$. 
 Then, we analyze Algorithm~\ref{sepnmf} 
 in case some noise is added to the input separable matrix $M$, and show that, under these assumptions, it is robust under any small perturbations; see Section~\ref{noisysec}. 
Finally, we compare our results with the ones from \cite{AGKM11, BRRT12} in Section~\ref{compa}.

\subsection{Separability and Strong Convexity Assumptions} \label{secass}

In the remainder of the paper, we will assume that the original data matrix $M = WH$ is separable, that is, each column of $W$ appears as a column of $M$. 
Recall that this condition is implied by the pure-pixel assumption in hyperspectral imaging; see Introduction. 
We will also assume that \textit{the matrix $W$ is full rank}. This is often implicitly assumed in practice otherwise the problem is in general ill-posed, because the matrix $H$ is then typically not uniquely determined; see, e.g., \cite{AGKM11, SX11}.

\begin{assumption} \label{asssep}
The separable matrix $M \in \mathbb{R}^{m \times n}$ can be written as 
 $M = W H = W [I_r, H']$,  
where $W \in \mathbb{R}^{m \times r}$ has rank $r$, $H \in \mathbb{R}^{r \times n}_+$, and the sum of the entries of each column of $H'$ is at most one, that is, $\sum_{k=1}^r H'_{kj} \leq 1$ $\forall j$, or, equivalently, $h'_j \in \Delta^r$ $\forall j$. 
\end{assumption} 
The assumption on matrix $H$ is made without loss of generality by 
\begin{enumerate}
\item[(1)] Permuting the columns of $M$ so that the first $r$ columns of $M$ correspond to the columns of $W$ (in the same order). 
\item[(2)] Normalizing $M$ so that the entries of each of its columns sum to one (except for its zero columns).  In fact, we have that 
\[
M = WH \iff M D_M^{-1} = W D_W^{-1} (D_W H D_M^{-1}),
\]
where 
\[
(D_X)_{ij} = \left\{ 
\begin{array}{cc}  
||X_{:j}||_1 & \text{ if $i = j$ and $X_{:j} \neq 0$,} \\ 
1 			     & \text{ if $i = j$ and $X_{:j} = 0$,} \\ 
0   & \text{ otherwise.} 
\end{array} \right.
\] 
By construction, the entries of each column of $M D_M^{-1}$ and $W D_W^{-1}$ sum to one (except for the zero columns of $M$), while the entries of each column of $(D_W H D_M^{-1})$ have to sum to one (except for ones corresponding to the zero columns of $M$ which are equal to zero) since $M = WH$.  
\end{enumerate}

In the hyperspectral imaging literature, the entries of each column of matrix $H$ are typically assumed to sum to one, hence Assumption~\ref{asssep} is slightly more general.   
This has several advantages: 
\begin{itemize}

\item It allows the image to contain `background' pixels with zero spectral signatures, which are present for example in hyperspectral images of objects in outer space (such as satellites).

\item It allows us to take into account different intensities of light among the pixels in the image, e.g., if there are some shadow parts in the scene or if the angle between the camera and the scene varies. Hence, although some pixels contain the same material(s) with the same abundance(s), their spectral signature could differ by a scaling factor. 

\item In the noisy case, it allows us to take into account endmembers with very small spectral signature 
as noise, although it is not clear whether relaxing the sum-to-one constraint is the best approach~\cite{BP12}. 
\end{itemize} 

\begin{remark}
Our assumptions actually do not require the matrix $M$ to be nonnegative, as $W$ can be any full-rank matrix. In fact, after the first step of Algorithm~\ref{sepnmf}, the residual matrix will typically contain negative entries. 
\end{remark}

We will also need to assume that the function $f$ in Algorithm~\ref{sepnmf} satisfies the following conditions. 
\begin{assumption} \label{fass1} 
The function $f:\mathbb{R}^m \to \mathbb{R}_+$ is strongly convex with parameter $\mu > 0$, its gradient is Lipschitz continuous with constant $L$, and its global minimizer is the all-zero vector with $f(0) = 0$. 
\end{assumption}

Notice that, for any strongly convex function $g$ whose gradient is Lipschitz continuous and whose global minimizer is $\bar{x}$, one can construct the function $f(x) = g(\bar{x}+x) - g(\bar{x})$ satisfying Assumption~\ref{fass1}. In fact, $f(0) = 0$ while $f(x) \geq 0$ for any $x$ since $g(\bar{x}+x) \geq g(\bar{x})$ for any $x$.  Recall that (see, e.g., \cite{UL01}) a function is strongly convex with parameter $\mu$ if and only if it is convex and for any $x, y \in \text{dom}(f)$ 
\[
f(\delta x + (1-\delta) y) 
\; \leq \;  
\delta f(x) + (1-\delta) f(y) - \frac{\mu}{2} \delta   (1-\delta) ||x-y||_2^2, 
\]
for any $\delta \in [0,1]$. Moreover, its gradient is Lipschitz continuous with constant $L$ if and only if for any $x, y \in \text{dom}(f)$ 
\[
||\nabla f (x) - \nabla f (y) ||_2 \leq L ||x-y||_2. 
\]
Convex analysis also tells us that if $f$ satisfies Assumption~\ref{fass1} then, for any $x, y$, 
\[
f(x) + \nabla f(x)^T (y-x) + \frac{\mu}{2} ||x-y||_2^2 
\leq
f(y)
\] 
and 
\[ 
f(y) 
\leq 
f(x) + \nabla f(x)^T (y-x) + \frac{L}{2} ||x-y||_2^2. 
\]
In particular, taking $x=0$, we have, for any $y \in \mathbb{R}^m$, 
\begin{equation} \label{normconv}
 \frac{\mu}{2} ||y||_2^2 
 \quad \leq  \quad 
f(y)
 \quad \leq  \quad 
\frac{L}{2} ||y||_2^2, 
\end{equation}
since $f(0) = 0$ and $\nabla f(0) = 0$ (because zero is the global minimizer of~$f$). \\ 

The most obvious choice for $f$ satisfying Assumption~\ref{fass1} is $f(x)=||x||^2_2$; we return to this matter in Section~\ref{l2norm}.

\subsection{Noiseless Case} \label{noiseless}

We now prove that, under Assumption~\ref{asssep} and \ref{fass1}, Algorithm~\ref{sepnmf} recovers a set of indices corresponding to the columns of $W$. 

\begin{lemma} \label{lemma1}
Let $Y = [W, 0_{m \times (r-k)}] = [w_1 \, w_2 \, \dots \, w_k \, 0_{m \times (r-k)}] \in \mathbb{R}^{m \times r}$ with $w_i \neq 0$ $\forall i$ and  $w_i \neq w_j$ $\forall i \neq j$, and let  $f:\mathbb{R}^m \to \mathbb{R}_+$ be a strongly convex function with $f(0) = 0$. Then 
\[
f(Yh) < \max_i f(w_i) \quad \text{ for all } h \in \Delta^r \text{ such that } h \neq e_j \forall j, 
\]
where $e_j$ is the $j$th column of the identity matrix. 
\end{lemma}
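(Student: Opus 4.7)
My plan is to realize $Yh$ as an exact convex combination of $k+1$ points by augmenting the $w_i$'s with the origin. Since $h \in \Delta^r$ gives $\sum_{i=1}^r h_i \leq 1$ and the last $r-k$ columns of $Y$ are zero, I would set $h_0 := 1 - \sum_{i=1}^k h_i \geq 0$ and write
\[
Yh \;=\; \sum_{i=1}^k h_i \, w_i \;+\; h_0 \cdot 0,
\]
presenting $Yh$ as a convex combination with probability weights $(h_1, \dots, h_k, h_0)$ on the $k+1$ pairwise distinct vectors $w_1, \dots, w_k, 0$ (distinctness comes from $w_i \neq w_j$ for $i \neq j$ and $w_i \neq 0$).

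The analytic input I would invoke is a strict Jensen inequality for strongly convex $f$: if a probability distribution on distinct points has at least two positive weights, then $f(\sum_i \lambda_i p_i) < \sum_i \lambda_i f(p_i)$. This follows by induction from the two-point form $f(\delta x + (1-\delta) y) \leq \delta f(x) + (1-\delta) f(y) - \tfrac{\mu}{2}\delta(1-\delta)\|x-y\|_2^2$, whose slack is strictly positive whenever $x \neq y$ and $\delta \in (0,1)$. A short case split on how many of the weights $(h_1, \dots, h_k, h_0)$ are positive then finishes the proof. If at least two are positive, strict Jensen together with $f(0)=0$ gives
\[
f(Yh) \;<\; \sum_{i=1}^k h_i f(w_i) + h_0 f(0) \;=\; \sum_{i=1}^k h_i f(w_i) \;\leq\; \max_i f(w_i).
\]
Otherwise all weight sits on a single augmented point; it cannot sit on any $w_j$ (that would force $h_j = 1$ and hence $h = e_j$ via $\sum_i h_i \leq 1$, contradicting the hypothesis), so $h_0 = 1$, $h_i = 0$ for every $i \leq k$, and $Yh = 0$. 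Then $f(Yh) = 0 < \max_i f(w_i)$, where the strict inequality uses $f(w_1) \geq \tfrac{\mu}{2}\|w_1\|_2^2 > 0$ from \eqref{normconv} together with $w_1 \neq 0$.

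The only real subtlety is bookkeeping: checking that the exclusion $h \neq e_j$ for every $j \in \{1, \dots, r\}$ is exactly what rules out the degenerate single-positive-weight configuration in which strict Jensen would fail. The analytic content—strong convexity implying strict Jensen—is standard, so I do not expect that to be the main obstacle.
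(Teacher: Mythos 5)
Your proof is correct and follows essentially the same route as the paper's: both write $Yh = \sum_{i=1}^k h_i w_i + \bigl(1-\sum_{i=1}^k h_i\bigr)\cdot 0$ as a convex combination of the distinct points $w_1,\dots,w_k,0$, apply a strict Jensen inequality coming from strong convexity, and dispose of the $Yh=0$ case separately via $f(0)=0<f(w_i)$. Your version merely makes the case bookkeeping and the multi-point strict Jensen induction more explicit than the paper does.
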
 
\begin{proof}  
By assumption on $f$, we have $f(w) > 0$ for any $w \neq 0$; see Equation~\eqref{normconv}. 
Hence, if $Yh = 0$, we have the result since $f(Yh) = 0 <  f(w_i)$ for all $i$.  Otherwise $Yh = \sum_{i=1}^k w_i h_i$ where $h_i \neq 0$ for at least one $1\leq i \leq k$ so that 
\begin{align*}
f(Yh) & = f\left(\sum_{i=1}^k h_i w_i + \left(1-\sum_{i=1}^k h_i\right) 0
\right) \\ 
& < \sum_{i=1}^k h_i f\left( w_i \right) \leq \max_i f(w_i). 
\end{align*} 
The first inequality is strict since $h \neq e_j \forall j$ and $h_i \neq 0$ for at least one $1\leq i \leq k$, and the second follows from the fact that $\sum_{i=1}^k h_i \leq \sum_{i=1}^r h_i \leq 1$. 
\end{proof}
 

\begin{theorem} \label{th1}
Let the matrix $M = WH$ satisfy Assumption~\ref{asssep} and the function $f$ satisfy Assumption~\ref{fass1}. Then Algorithm~\ref{sepnmf} recovers a set of indices $J$ such that $M(:,J)=W$ up to permutation. 
\end{theorem}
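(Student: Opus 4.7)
The natural strategy is induction on the iteration counter $j$, with the following invariant: after the $j$-th pass through the \texttt{while} loop, $J=\{j_1,\ldots,j_j\}$ consists of distinct indices with $\{M_{:j_1},\ldots,M_{:j_j}\}=\{w_{i_1},\ldots,w_{i_j}\}$ for some distinct $i_1,\ldots,i_j\in\{1,\ldots,r\}$, and the current residual equals $R=P_j M$, where $P_j$ denotes the orthogonal projector onto $\mathrm{span}(w_{i_1},\ldots,w_{i_j})^{\perp}$. The base case $j=0$ is vacuous. For the inductive step, I would carry out the following sub-steps.

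First, rewrite the residual in a form matching Lemma~\ref{lemma1}. Using Assumption~\ref{asssep}, for every column index $k$,
\[
R_{:k} \;=\; P_{j-1}M_{:k} \;=\; P_{j-1}Wh_k \;=\; \sum_{l=1}^{r} (P_{j-1}w_l)\, h_{lk}.
\]
For $l\in\{i_1,\ldots,i_{j-1}\}$ we have $P_{j-1}w_l=0$, while for the remaining $l$ the vectors $\tilde w_l:=P_{j-1}w_l$ are nonzero and pairwise distinct: any linear relation among them would, after adding the removed $w_{i_m}$'s, contradict full rank of $W$. Thus $R_{:k}=\sum_{l\notin\{i_1,\ldots,i_{j-1}\}}\tilde w_l h_{lk}$, and the coefficient vector still lies in $\Delta^{r}$. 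Applying Lemma~\ref{lemma1} with the $\tilde w_l$'s playing the role of the $w_i$'s (and zero padding for the missing indices) gives
\[
f(R_{:k}) \;<\; \max_{l\notin\{i_1,\ldots,i_{j-1}\}} f(\tilde w_l),
\]
unless the restricted coefficient vector is a standard basis vector $e_l$.

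Next, I would show that whenever $R_{:k}=\tilde w_l$ for such an $l$, necessarily $M_{:k}=w_l$. Indeed, linear independence of the $\tilde w_l$'s forces $h_{lk}=1$ and $h_{l'k}=0$ for all other $l'\notin\{i_1,\ldots,i_{j-1}\}$; combined with $\sum_{l''}h_{l''k}\le 1$ from Assumption~\ref{asssep}, this forces $h_{l''k}=0$ for $l''\in\{i_1,\ldots,i_{j-1}\}$ as well, so $M_{:k}=w_l$. Consequently any maximizer $j^{\ast}$ of $f(R_{:k})$ satisfies $M_{:j^{\ast}}=w_l$ for some new endmember index $l$, regardless of how ties are broken (the tie-breaking rule $\dagger$ is only needed later in the noisy analysis).

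Finally, I would verify the update of the projector. Setting $u_j=R_{:j^{\ast}}=\tilde w_l$, the new residual is $(I-u_ju_j^T/\|u_j\|_2^2)P_{j-1}M$; since $\tilde w_l\in\mathrm{span}(w_{i_1},\ldots,w_{i_{j-1}})^{\perp}$ and $\mathrm{span}(w_{i_1},\ldots,w_{i_{j-1}},w_l)=\mathrm{span}(w_{i_1},\ldots,w_{i_{j-1}},\tilde w_l)$, this composite projector equals $P_j$, closing the induction. The loop cannot terminate early: if $j-1<r$ then $\dim\mathrm{span}(w_{i_1},\ldots,w_{i_{j-1}})<r=\mathrm{rank}(W)$, so some $w_l$ is not killed by $P_{j-1}$ and $R\ne 0$. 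After $r$ iterations the invariant yields $M(:,J)=W$ up to permutation.

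The only genuinely delicate point is verifying that the ``residual after projection'' really does fit the template of Lemma~\ref{lemma1}—in particular, that the projected endmembers $\tilde w_l$ remain nonzero and distinct, and that $R_{:k}=\tilde w_l$ back-propagates to $M_{:k}=w_l$ rather than merely $M_{:k}-w_l\in\mathrm{span}(w_{i_1},\ldots,w_{i_{j-1}})$; both follow cleanly from full rank of $W$ together with the simplex constraint $h_k\in\Delta^r$. Everything else is bookkeeping.
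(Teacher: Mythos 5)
Your proof is correct and follows essentially the same route as the paper's: induction on the number of extracted columns, applying Lemma~\ref{lemma1} to the projected residual (written there as $R^{(k)}=[W^{*},0_{m\times k}]H$, which is your $P_{j-1}M$) and using full rank of $W$ to keep the projected endmembers nonzero and distinct. You are more explicit than the paper about why the maximizer back-propagates to an exact column of $W$ via the simplex constraint and why the loop cannot terminate early, but these are details of the same argument rather than a different approach.
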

\begin{proof}
Let us prove the result by induction. 
 
\textit{First step.} Lemma~\ref{lemma1} applies since $f$ satisfies Assumption~\ref{fass1} while $W$ is full rank. Therefore, the first step of Algorithm~\ref{sepnmf} extracts one of the columns of $W$.  Assume without loss of generality the last column $w_r$ of $W$ is extracted, then the first  residual has the form  
 \begin{align*} 
 R^{(1)}   
   & = \left(I - \frac{{w}_{r} {w}_{r}^T}{||{w}_{r}||_2^2}\right) W H   
   = [W' \, 0_{m \times 1}] H = W^{(1)} H, 
 \end{align*}
 i.e., the matrix $R^{(1)}$ is obtained by projecting the columns of $M$ onto the orthogonal complement of $w_{r}$. We observe that $W^{(1)}$ satisfies the conditions of Lemma~\ref{lemma1} as well because $W'$ is full rank since $W$ is. This implies, by Lemma~\ref{lemma1}, that the second step of Algorithm~\ref{sepnmf} extracts one of the columns of~$W'$. 
  
\textit{Induction step.} Assume that after $k$ steps the residual has the form $R^{(k)} = [W^* \; \mathbf{0}_{m \times k}] H$ with $W^*$ full rank. Then, by Lemma~\ref{lemma1}, the next extracted index will correspond to one of the columns of $W^*$ (say, without loss of generality, the last one) and the next residual will have the form 
$R  
= [W^{\dagger}, \mathbf{0}_{m \times (k+1)}] H$ where $W^{\dagger}$ full rank since $W^{*}$ is, and $H$ is unchanged. By induction, after $r$ steps, we have that the indices corresponding to the different columns of $W$ have been extracted and that the residual is equal to zero ($R = \mathbf{0}_{m \times r}H$). 
\end{proof}

\subsection{Adding Noise} \label{noisysec}

In this section, we analyze how perturbing the input data matrix affects the performances of Algorithm~\ref{sepnmf}. We are going to assume that the input perturbed matrix $M'$ can be written as $M' = M + N$ where $M$ is the noiseless original separable matrix satisfying Assumption~\ref{asssep}, and $N$ is the noise with $||n_i||_2 \leq \epsilon$ for all $i$ for some sufficiently small $\epsilon \geq 0$.

\subsubsection{Analysis of a Single Step of Algorithm~\ref{sepnmf}}

Given a matrix $W$, we introduce the following  notations: 
$\gamma(W) = \min_{i\neq j} ||w_i-w_j||_2$, $\nu(W) =  \min_{i} ||w_i||_2$, $\omega(W) = \min \left\{\nu(W), \frac{1}{\sqrt{2}} \gamma(W)\right\}$,  and   $K(W) =  \max_{i} ||w_i||_2$.  

\begin{lemma} \label{lemscf}
Let $Y = [W, Q]$ where $W \in \mathbb{R}^{m \times k}$ and $Q \in \mathbb{R}^{m \times (r-k)}$, and let $f$ satisfy Assumption~\ref{fass1}, with  strong convexity parameter $\mu$ and its gradient having Lipschitz constant~$L$.  If 
\[
 \nu(W) 
 > 2 \sqrt{\frac{L}{\mu}} \, K(Q) \,  ,  
\]
then, for any $0 \leq \delta \leq \frac{1}{2}$, 
\begin{equation} \label{simfs}
f^* \quad = \quad \max_{x \in \Delta^r} f(Yx) \; \text{ such that }  x_i \leq 1 - {\delta} \, \text{ for } 1 \leq i \leq k, 
\end{equation}
satisfies 
$f^* 
 \leq \max_i f(w_i) -  \frac{1}{2} \, \mu \,  (1-\delta) \, \delta \, \omega(W)^2$. 
\end{lemma}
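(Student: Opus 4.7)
My plan is to reduce the maximization to a finite enumeration by exploiting convexity. Since $f$ is convex and $Y$ is linear, $x \mapsto f(Yx)$ is convex, so it attains its maximum over the polytope
\[
P = \{ x \in \Delta^r : x_i \leq 1 - \delta \text{ for } 1 \leq i \leq k \}
\]
at a vertex of $P$. A quick analysis of active constraints (for $\delta \in (0, 1/2]$) shows that the vertices of $P$ fall into four families: (a) $x = 0$; (b) $x = e_j$ with $k < j \leq r$; (c) $x = (1-\delta) e_i$ with $1 \leq i \leq k$; and (d) $x = (1-\delta) e_{i_1} + \delta e_{i_2}$ with $1 \leq i_1 \leq k$ and $i_2 \neq i_1$. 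The case $\delta = 0$ makes the claim trivial. It therefore suffices to verify the target bound at each such vertex.

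The arithmetic tools I would use are: $M_* := \max_i f(w_i) \geq \tfrac{\mu}{2}\nu(W)^2$ and $f(q_j) \leq \tfrac{L}{2}K(Q)^2$ from \eqref{normconv}; the two-point strong-convexity inequality; and the hypothesis $\nu(W) > 2\sqrt{L/\mu}\,K(Q)$ rewritten as $\tfrac{L}{2}K(Q)^2 < \tfrac{\mu}{8}\nu(W)^2$, which also yields $K(Q) < \nu(W)/2$ (using the standard fact $L \geq \mu$), hence $\|w_i - q_j\| > \nu(W)/2$ for all $i,j$. Cases (a) and (b) are immediate: $f(Yv)$ is either $0$ or at most $\tfrac{L}{2}K(Q)^2$, so the slack $M_* - f(Yv)$ is at least $\tfrac{3}{8}\mu\nu(W)^2$, which dominates the target because $(1-\delta)\delta \leq 1/4$ and $\omega(W) \leq \nu(W)$. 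Case (c) follows from strong convexity applied to the pair $(w_i, 0)$, giving a gain of $\tfrac{\mu}{2}\delta(1-\delta)\|w_i\|^2 \geq \tfrac{\mu}{2}\delta(1-\delta)\omega(W)^2$. Case (d) with $i_2 \leq k$ follows from strong convexity applied to $(w_{i_1}, w_{i_2})$ together with the inequality $\gamma(W)^2 \geq 2\omega(W)^2$ built into the definition of $\omega$.

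The hard part will be the subcase of (d) with $i_2 > k$, where $Yv = (1-\delta) w_{i_1} + \delta\, q_{i_2-k}$ mixes a $w$-vertex with a $q$-vertex. Strong convexity alone yields only a gain of $\tfrac{\mu}{2}\delta(1-\delta)\|w_{i_1} - q_{i_2-k}\|^2 \geq \tfrac{\mu}{8}\delta(1-\delta)\nu(W)^2$, which falls short of the target by a factor of 4. The missing factor must come from the Jensen slack $\delta\bigl(M_* - \tfrac{L}{2}K(Q)^2\bigr) \geq \tfrac{3}{8}\delta\mu\nu(W)^2$, and it is precisely here that the full strength of the hypothesis $\nu(W) > 2\sqrt{L/\mu}\,K(Q)$ is invoked. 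Summing the two contributions gives a total slack of at least $\tfrac{\mu\delta(4-\delta)}{8}\nu(W)^2$, which exceeds $\tfrac{\mu(1-\delta)\delta}{2}\omega(W)^2$ since $(4-\delta)/4 \geq 1-\delta$ and $\nu(W) \geq \omega(W)$, closing the case analysis.
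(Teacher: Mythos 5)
Your proposal is correct and follows essentially the same route as the paper's proof: reduction to the vertices of the polytope, the same five vertex families, and the same key estimate for the mixed $w$--$q$ vertex, where the shortfall in the strong-convexity gain is covered by the gap $\max_i f(w_i) - f(q_j) \geq \tfrac{3}{8}\mu\,\nu(W)^2$ coming from the hypothesis $\nu(W) > 2\sqrt{L/\mu}\,K(Q)$. The only cosmetic difference is that you verify the bound directly at the vertices $x=0$ and $x=e_j$ ($j>k$), whereas the paper rules these out by contradiction using the feasible point $(1-\delta)w_i$; both work.
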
 
\begin{proof} 
By strong convexity of $f$, the optimal solution $x^*$ of \eqref{simfs} is attained at a vertex of the feasible domain $\{ x \in \mathbb{R}^r \ | \ x_i \geq 0 \, \forall i, \sum_{i=1}^r x_i \leq 1, x_i \leq 1 - {\delta} \, 1 \leq i \leq k  \}$, that is, either
\begin{enumerate}
\item[(a)]  $x^* = 0$, 
\item[(b)] $x^* = e_i$ for $k+1 \leq i \leq r$, 
\item[(c)] $x^* = (1-\delta)e_j$ for $1 \leq j \leq k$, 
\item[(d)] $x^* = \delta e_i + (1-\delta)e_j$ for $1 \leq i,j \leq k$, or 
\item[(e)] $x^* = \delta e_i + (1-\delta)e_j$ for $k+1 \leq i \leq r$ and $1 \leq j \leq k$. 
\end{enumerate}
Before analyzing the different cases, let us provide a lower bound for $f^*$. Using  Equation~\eqref{normconv}, we have 
\[
f((1-\delta) w_i) 
\geq
\frac{1}{2} \mu (1-\delta)^2 ||w_i||_2^2 . 
\]
Since $(1-\delta) w_i$ is a feasible solution and  $0 \leq \delta \leq \frac{1}{2}$, this implies $f^* \geq  \frac{\mu}{8} K(W)^2$. 
 Recall that since $f$ is strongly convex with parameter $\mu$, we have 
\[
 f\left(\delta y + (1-\delta) z\right) 
\leq \delta f(y) + (1-\delta) f(z) - \frac{1}{2} \mu \delta (1-\delta) ||y-z||_2^2. 
\]
Let us now analyze the different cases.  

\begin{enumerate}

\item[(a)] Clearly, $x^* \neq 0$ since $f(0) = 0$ and $f(y) > 0$ for all $y \neq 0$, cf.~Equation~\eqref{normconv}.

\item[(b)] $Yx^* = q_i$ for some $i$. Using Equation~\eqref{normconv}, we have 
\begin{align*}
f^*  = f(q_i) 
& \leq \frac{L}{2}  K(Q)^2 
< \frac{\mu}{8} \nu(W)^2 
  \leq \frac{\mu}{8} K(W)^2 \leq f^*, 
\end{align*}
since $\nu(W) > 2 \sqrt{\frac{L}{\mu}} K(Q)$, a contradiction.  

\item[(c)] $Yx^* = (1-\delta) w_i$ for some $i$ : 
\begin{equation} \label{eqabove}
f^* 
\leq (1-\delta) f(w_i) - \frac{1}{2} \mu \delta (1-\delta) ||w_i||_2^2 . 
\end{equation}
By strong convexity, we also have 
$f(w_i) \geq \frac{\mu}{2} ||w_i||_2^2 
\geq  
\frac{1}{2} \mu (1-\delta) ||w_i||_2^2$.  
Plugging it in \eqref{eqabove} gives
\begin{align*} 
f^* 
& \leq f(w_i) - \delta f(w_i) - \frac{1}{2} \mu \delta (1-\delta) ||w_i||_2^2 \\
& \leq  f(w_i) - \mu \delta (1-\delta) ||w_i||_2^2 \\ 
& \leq \max_i f(w_i) - \mu \delta (1-\delta) \nu(W)^2 . 
\end{align*}

\item[(d)] $Yx^* = \delta w_i + (1-\delta) w_j$ for some $i \neq j$ :  
\begin{align*} 
f^* & \leq \delta f(w_i) + (1-\delta) f(w_j) - \frac{1}{2} \mu \delta (1-\delta) ||w_i-w_j||_2^2 \\
& \leq \max_i f(w_i) - \mu \delta (1-\delta) \left(\frac{1}{\sqrt{2}}\gamma(W)\right)^2. 
\end{align*}

\item[(e)] $Yx^* = \delta q_i + (1-\delta) w_j$ for some $i$, $j$. First, we have 
\begin{align*}
f^* & \leq \delta f(q_i) + (1-\delta) f(w_j) - \frac{1}{2} \mu \delta (1-\delta) ||q_i-w_j||_2^2 \\
& \leq f(w_j) + \delta f(q_i) - \delta f(w_j)  \\ 
& \quad \;  - \frac{1}{2} \mu \delta (1-\delta) (\nu(W)-K(Q) )^2 \, .  
\end{align*} 
In fact, $||q_i-w_j||_2 \geq ||w_j||_2 - ||q_i||_2 \geq \nu(W) - K(Q)$. Then, using 
\[
f(q_i) \leq \frac{L}{2} ||q_i||_2^2 \leq \frac{L}{2} K(Q)^2 < \frac{1}{4} \frac{\mu}{2} \nu(W)^2 \leq \frac{1}{4} f(w_j), 
\]
 $\nu(W) - K(Q) > \left(1-\frac{1}{2} \sqrt{\frac{\mu}{L}}\right) \nu(W) \geq \frac{1}{2}\nu(W)$ and $f(w_j) \geq \frac{\mu}{2} (1-\delta) \nu(W)^2$, we obtain
\begin{align*}
f^* & < f(w_j) - \frac{3}{4} \delta f(w_j)  - \frac{1}{8} \mu \delta (1-\delta) \nu(W)^2 \\
 & \leq f(w_j) - \frac{1}{2} \mu \delta (1-\delta) \nu(W)^2 . 
\end{align*} 
\end{enumerate} 
\end{proof}

\begin{lemma} \label{fbound}
Let the function $f : \mathbb{R}^m \to \mathbb{R}_+$ satisfy Assumption~\ref{fass1}, with strong convexity parameter $\mu$ and its gradient having Lipschitz constant $L$.  
Then, for any $x, n \in \mathbb{R}^m$ and any $\epsilon, K \geq 0$ satisfying $||x||_2 \leq K$ and $||n||_2 \leq \epsilon \leq K$, we have 
\begin{equation} \label{ufu}
f(x+n) \leq f(x) + \left(\epsilon K L + \frac{L}{2} \epsilon^2\right) \leq f(x) + \frac{3}{2} \epsilon K L, \text{ and } 
\end{equation}
\begin{equation} \label{ufl}
f(x+n) \geq f(x) - \left(\epsilon  K L - \frac{\mu}{2} \epsilon^2 \right) \geq f(x) - \epsilon  K L. 
\end{equation} 
\end{lemma}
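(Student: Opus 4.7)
The plan is to invoke directly the two standard quadratic sandwich inequalities already recalled in the paper just above the lemma, namely
\[
f(x) + \nabla f(x)^T (y-x) + \tfrac{\mu}{2}\|y-x\|_2^2 \;\leq\; f(y) \;\leq\; f(x) + \nabla f(x)^T (y-x) + \tfrac{L}{2}\|y-x\|_2^2,
\]
and specialize them to $y = x+n$. The linear term $\nabla f(x)^T n$ is handled by Cauchy--Schwarz together with a bound on $\|\nabla f(x)\|_2$.

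First I would bound $\|\nabla f(x)\|_2$. Since $0$ is the global minimizer of $f$ we have $\nabla f(0)=0$, and Lipschitz continuity of $\nabla f$ with constant $L$ then yields
\[
\|\nabla f(x)\|_2 \;=\; \|\nabla f(x) - \nabla f(0)\|_2 \;\leq\; L\|x\|_2 \;\leq\; LK.
\]
Combining with Cauchy--Schwarz gives $|\nabla f(x)^T n| \leq LK\epsilon$.

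Second, I plug $y=x+n$ into the upper quadratic bound to obtain
\[
f(x+n) \;\leq\; f(x) + \nabla f(x)^T n + \tfrac{L}{2}\|n\|_2^2 \;\leq\; f(x) + LK\epsilon + \tfrac{L}{2}\epsilon^2,
\]
which is the first inequality in \eqref{ufu}. The simplification to $\tfrac{3}{2}\epsilon K L$ follows immediately from $\epsilon \leq K$, which gives $\tfrac{L}{2}\epsilon^2 \leq \tfrac{L}{2}\epsilon K$. Symmetrically, the lower quadratic bound yields
\[
f(x+n) \;\geq\; f(x) - LK\epsilon + \tfrac{\mu}{2}\epsilon^2 \;\geq\; f(x) - \epsilon K L,
\]
since $\tfrac{\mu}{2}\epsilon^2 \geq 0$, proving \eqref{ufl}.

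There is really no serious obstacle here; the result is a standard first-order perturbation estimate for strongly convex, gradient-Lipschitz functions whose minimizer is at the origin. The only thing to be careful about is not forgetting that the Lipschitz bound on the gradient is evaluated relative to $\nabla f(0)=0$ (rather than trying to bound $\|\nabla f(x)\|_2$ directly from strong convexity), and using the hypothesis $\epsilon \leq K$ in the right place to absorb the quadratic $\tfrac{L}{2}\epsilon^2$ term into the linear $LK\epsilon$ term.
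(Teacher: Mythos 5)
Your proof is correct and follows essentially the same route as the paper's: apply the two quadratic bounds at $y=x+n$ and control the linear term via $\|\nabla f(x)\|_2 = \|\nabla f(x)-\nabla f(0)\|_2 \le L\|x\|_2 \le LK$. The one place where your ``symmetrically'' glosses over a genuinely non-symmetric point is the lower bound: the two terms $-LK\|n\|_2$ and $\tfrac{\mu}{2}\|n\|_2^2$ move in opposite directions as $\|n\|_2$ grows, so passing from $\|n\|_2$ to $\epsilon$ in the intermediate inequality $f(x+n)\ge f(x)-LK\epsilon+\tfrac{\mu}{2}\epsilon^2$ requires noting that $y\mapsto -LKy+\tfrac{\mu}{2}y^2$ is decreasing on $[0,\epsilon]$, which holds because $\epsilon\le K\le LK/\mu$ (using $L\ge\mu$); the paper makes exactly this observation explicit, and your argument should too.
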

\begin{proof} 
For the upper bound \eqref{ufu}, we use the fact that the gradient of $f$ is  Lipschitz continuous with constant $L$ 
\begin{align*}
f(x+n) 
& \leq f(x) + \nabla f(x)^T n + \frac{L}{2} ||n||_2^2 \\
& \leq f(x) + \epsilon K L + \frac{L}{2} \epsilon ^2  \leq f(x) + \frac{3}{2} \epsilon KL, 
\end{align*}
for any $||x||_2 \leq K$, $||n||_2 \leq \epsilon \leq K$. The second inequality follows from the fact that $\nabla f(0) = 0$ and by Lipschitz continuity of the gradient:  $||\nabla f(x) - 0||_2 \leq L ||x-0||_2 \leq L K$  for any $||x||_2 \leq K$.

For the lower bound \eqref{ufl}, we use  strong convexity 
\begin{align*}
f(x+n) 
& \geq f(x) + \nabla f(x)^T n + \frac{\mu}{2} ||n||_2^2 \\
& \geq f(x) - K L ||n||_2  + \frac{\mu}{2} ||n||_2^2 \\ 
& \geq f(x) -  \left(\epsilon  K L - \frac{\mu}{2} \epsilon^2\right),  
\end{align*}
for any $||x||_2 \leq K$, $||n||_2 \leq \epsilon \leq K$. 
The third inequality follows from the fact that
\[
\max_{0 \leq y \leq \epsilon} \left( y  K L - \frac{\mu}{2} y^2 \right) = \epsilon  K L - \frac{\mu}{2} \epsilon^2 , 
\]
since $K \geq \epsilon$ and $L \geq \mu$.  
\end{proof}

We can now prove the theorem which will be used in the induction step to prove that Algorithm~\ref{sepnmf} works under small perturbations of the input separable matrix. 

\begin{theorem} \label{Th2} 
Let 
\begin{itemize} 

\item $f$ satisfy Assumption~\ref{fass1}, with  strong convexity parameter $\mu$, and its gradient have Lipschitz constant $L$.  

\item $M' = M + N$ with $M = YH = [W \, Q] H$, where $W \in \mathbb{R}^{m \times k}$, $K(N) \leq \epsilon$, $\nu(W) > 2 \sqrt{\frac{L}{\mu}} K(Q)$, and $H = [I, H'] \in \mathbb{R}^{r \times n}_+$ where the sum of the entries of the columns of $H'$ is at most one. We will denote $\omega(W)$ and $K(W)$, $\omega$ and $K$ respectively. 
 
\item $\epsilon$ be sufficiently small so that $\epsilon \leq \frac{\mu {\omega}^2}{20 K L}$. 

\end{itemize}
Then the index $i$ corresponding to a column $m'_i$ of $M'$ that maximizes the function $f$  satisfies 
\begin{equation} \label{eqmdel2} 
m_i = [W, Q]h_i, 
\quad \text{ where }  h_i(p) \geq 1 - \delta \text{ for some }  1 \leq p \leq k, 
\end{equation} 
and  $\delta =  \frac{10  \epsilon K L}{\mu \omega^2}$, 
which implies  
\begin{equation} \label{errbound}
||m'_i - w_p||_2 
\leq \epsilon + 2 K \delta
 = \epsilon \left( 1 + 20 \frac{K^2}{\omega^2} \frac{L}{\mu} \right).   
\end{equation} 
\end{theorem}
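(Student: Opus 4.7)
The plan is to argue by contradiction: assuming the negation of~\eqref{eqmdel2}, I use Lemma~\ref{lemscf} to upper-bound $f(m'_i)$ and show this contradicts a lower bound obtained by comparing $f(m'_i)$ to the $f$-values at those columns of $M'$ that sit directly on the columns of $W$. The hypothesis $\epsilon \leq \mu\omega^2/(20KL)$ first implies $\epsilon \leq K$ (using $\omega \leq \nu(W) \leq K$ and $\mu \leq L$), so Lemma~\ref{fbound} applies throughout. Because $M = [W,Q]H$ with $H = [I_r, H']$, each of the first $k$ columns $m_p$ of $M$ equals $w_p$, and hence $m'_p = w_p + n_p$ with $\|n_p\|_2 \leq \epsilon$. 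The lower bound of Lemma~\ref{fbound} then gives $f(m'_p) \geq f(w_p) - \epsilon K L$ for every $1 \leq p \leq k$, so by maximality
\[
f(m'_i) \;\geq\; \max_{1 \leq p \leq k} f(m'_p) \;\geq\; \max_{1 \leq p \leq k} f(w_p) - \epsilon K L.
\]

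For the upper bound, suppose for contradiction that $h_i(p) < 1 - \delta$ for every $1 \leq p \leq k$, where $\delta = 10\epsilon KL/(\mu\omega^2)$. Then $h_i$ is feasible for the constrained maximization in Lemma~\ref{lemscf}; moreover $\delta \leq 1/2$ (by the $\epsilon$ hypothesis) and the separation condition $\nu(W) > 2\sqrt{L/\mu}\,K(Q)$ is precisely what Lemma~\ref{lemscf} requires, so
\[
f(Yh_i) \;\leq\; \max_{1 \leq p \leq k} f(w_p) - \tfrac{1}{2}\mu(1-\delta)\delta\,\omega^2.
\]
Since $h_i \in \Delta^r$ and $K(Q) < \nu(W)/2 \leq K/2$, one has $\|Yh_i\|_2 \leq \max_q \|Y_{:q}\|_2 \leq K$, so the upper bound of Lemma~\ref{fbound} lifts this to $f(m'_i) \leq f(Yh_i) + \tfrac{3}{2}\epsilon K L$. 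Combining with the lower bound produces $\mu(1-\delta)\delta\,\omega^2 \leq 5\epsilon K L$; substituting the chosen $\delta$ and using $1-\delta \geq 1/2$ gives $\mu(1-\delta)\delta\,\omega^2 \geq 5\epsilon K L$, contradicting the strict inequalities that come from the strict negation $h_i(p) < 1-\delta$.

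Having established the existence of $p \leq k$ with $h_i(p) \geq 1-\delta$, the distance bound~\eqref{errbound} is then routine: writing $m_i - w_p = (h_i(p)-1)w_p + \sum_{q \neq p} h_i(q)\,Y_{:q}$, the triangle inequality together with $1 - h_i(p) \leq \delta$, $\sum_{q \neq p} h_i(q) \leq \delta$ (from $h_i \in \Delta^r$ and $h_i(p) \geq 1-\delta$), and $\|Y_{:q}\|_2 \leq K$ for every $q$, yields $\|m_i - w_p\|_2 \leq 2\delta K$, and adding $\|n_i\|_2 \leq \epsilon$ recovers~\eqref{errbound} after substituting $\delta$. The main obstacle is ensuring genuine strictness of the final contradiction when $\delta$ sits at its extremal value $1/2$ (i.e.\ when $\epsilon$ saturates its upper bound); this requires careful bookkeeping of which inequalities in Lemma~\ref{lemscf} (notably case~(b), which exploits the strict hypothesis on $\nu(W)$ versus $K(Q)$) and in Lemma~\ref{fbound} are strict, and propagating the strict form of the negation $h_i(p) < 1-\delta$ through the feasibility argument.
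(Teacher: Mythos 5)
Your proposal follows essentially the same route as the paper's proof: the same contradiction combining the constrained-maximum bound of Lemma~\ref{lemscf} with the perturbation bounds of Lemma~\ref{fbound}, the same comparison of $f(m'_i)$ against the perturbed columns $w'_j$ of $M'$, and the same convex-combination decomposition for the final distance bound \eqref{errbound}. The strictness issue you flag at the extremal value $\delta = 1/2$ is resolved in the paper exactly as you anticipate: the inequality $f(Yh_i) < \max_{x} f(Yx)$ over the constrained set is strict because that maximum is attained only at vertices having some coordinate equal to $1-\delta$, whereas $h_i$ has all of its first $k$ coordinates strictly below $1-\delta$.
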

\begin{proof}
First note that $\epsilon \leq \frac{\mu \omega^2}{20 K L}$ implies $\delta =  \frac{10  \epsilon K L}{\mu \omega^2} \leq \frac{1}{2}$. 
Let us then prove Equation~\eqref{eqmdel2} by contradiction. Assume the extracted index, say $i$, for which $m'_i = m_i + n_i = Yh_i + n_i$ satisfies $h_i(l) < 1 - \delta$ for $1 \leq l \leq k$.  
We have 
\begin{align} 
f(m'_i) & \hspace{0.55cm} = \hspace{0.55cm} 
f(m_i + n_i) \nonumber \\
& \underset{(Lemma~\ref{fbound})}{\leq}
 f(Yh_i) + \frac{3}{2} \epsilon K L   \nonumber \\ 
&  \hspace{0.55cm} < \hspace{0.55cm}  \max_{x \in \Delta^r, x(l) < 1-\delta \, 1 \leq l \leq k} f(Yx)  
 + \frac{3}{2} \epsilon K L  \nonumber \\ 
& \underset{(Lemma~\ref{lemscf})}{\leq}
\max_j f(w_j) - \frac{1}{2} \mu \delta (1-\delta) \omega^2 + \frac{3}{2} \epsilon K L \nonumber \\
& \underset{(Lemma~\ref{fbound})}{\leq}
 \max_j f(w'_j) -  \frac{1}{2} \mu \delta (1-\delta) \omega^2 + \frac{5}{2} \epsilon K L, \label{eqmp} 
\end{align} 
where $w'_j$ is the perturbed column of $M$ corresponding to $w_j$ (that is, the $j$th column of $M'$). 
The first inequality follows from Lemma~\ref{fbound}. 
In fact, we have $\epsilon \leq K$ since $\mu \leq L$ and $\omega \leq K$,
$||m_i||_2 = ||Wh_i||_2 \leq \max_i ||w_i||_2 = K$ (by convexity of $||.||_2$), and $||n_i||_2 \leq \epsilon$ $\forall i$ so that $f(m'_i) \leq f(m_i) + \frac{3}{2} \epsilon K L$. 
The second inequality is strict since the maximum is attained at a vertex with $x(l) = 1-\delta$ for some $1 \leq l \leq k$ at optimality (see proof of Lemma~\ref{lemscf}). 
 The third inequality follows from  Lemma~\ref{lemscf} while the fourth follows from the fact that $||w_j||_2 \leq K$ so that $f(w_j) - \epsilon K L \leq f(w'_j)$ for all $j$ by Lemma~\ref{fbound}. 

 We notice that, since $\delta \leq \frac{1}{2}$, 
\begin{equation} \nonumber 
 \frac{1}{2} \mu \delta (1-\delta) \omega^2 
\geq   \frac{1}{4} \mu \omega^2 \delta = \frac{1}{4} \mu \omega^2 \frac{10  \epsilon K L}{\mu \omega^2}
=  \frac{5}{2} \epsilon K L. 
\end{equation}
Combining this inequality with Equation~\eqref{eqmp}, we obtain $f(m'_i) < \max_j f(w'_j)$, a contradiction since $m'_i$ should maximize $f$ among the columns of $M'$ and the $w'_j$'s are among the columns of $M'$.

To prove Equation~\eqref{errbound}, we use Equation~\eqref{eqmdel2} and observe that 
\begin{equation} \nonumber
m_i = (1-\delta') w_p + \sum_{k\neq p} \alpha_k y_k \text{ for some $p$ and } 1-\delta' \geq 1-\delta,  
\end{equation} 
so that $\sum_{k\neq p} \alpha_k \leq \delta' \leq \delta$. 
Therefore, 
\begin{align*}
\left\|m_i - w_p\right\|_2 
& = \left\| - \delta' w_p + \sum_{k\neq p} \alpha_k w_k \right\|_2 \\ 
& \leq 2 \delta' \max_j ||w_j||_2 \leq 2 \delta' K \leq 2 K \delta, 
\end{align*} 
 which gives
\[
||m'_i - w_p||_2 \leq ||(m'_i-m_i) + (m_i - w_p)||_2 \leq \epsilon + 2 K \delta, 
\]
for some $1 \leq p \leq k$.  
\end{proof}

It is interesting to relate the ratio $\frac{K(W)}{\omega(W)}$ to the condition number  of matrix $W$, given by the ratio of its largest and smallest singular values $\kappa(W) = \frac{\sigma_1(W)}{\sigma_r(W)}$.

\begin{lemma} \label{gv1}
Let $W = [w_1 \, w_2 \, \dots \, w_r] \in \mathbb{R}^{m \times r}$. 
Then 
\[
\omega(W) \geq \sigma_r(W) \, .
\]
\end{lemma}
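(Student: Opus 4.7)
The plan is to use the variational characterization of the smallest singular value: for $W \in \mathbb{R}^{m \times r}$ with $m \geq r$, we have
\[
\sigma_r(W) \;=\; \min_{\|x\|_2 = 1} \|Wx\|_2 .
\]
Since $\omega(W) = \min\{\nu(W), \tfrac{1}{\sqrt{2}}\gamma(W)\}$, it suffices to exhibit, for each of the two minimands, a unit vector $x$ whose image $Wx$ has the corresponding norm, thereby bounding $\sigma_r(W)$ from above by that norm.

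For the bound $\sigma_r(W) \leq \nu(W)$, I would plug in $x = e_i$, the $i$-th standard basis vector, which is a unit vector satisfying $\|We_i\|_2 = \|w_i\|_2$. Taking the minimum over $i$ yields $\sigma_r(W) \leq \min_i \|w_i\|_2 = \nu(W)$.

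For the bound $\sigma_r(W) \leq \tfrac{1}{\sqrt{2}}\gamma(W)$, I would use the unit vector $x = \tfrac{1}{\sqrt{2}}(e_i - e_j)$ for $i \neq j$; it has norm one and gives $\|Wx\|_2 = \tfrac{1}{\sqrt{2}}\|w_i - w_j\|_2$. Minimizing over the pair $(i,j)$ yields $\sigma_r(W) \leq \tfrac{1}{\sqrt{2}}\gamma(W)$.

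Combining the two inequalities produces $\sigma_r(W) \leq \omega(W)$. There is no real obstacle here; the only subtlety is to remember that both candidate vectors $e_i$ and $\tfrac{1}{\sqrt{2}}(e_i-e_j)$ lie in $\mathbb{R}^r$ and have unit $\ell_2$-norm, so they are legitimate test vectors in the min-characterization of $\sigma_r(W)$.
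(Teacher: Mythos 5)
Your proof is correct and follows essentially the same route as the paper: both arguments reduce to the inequality $\|Wx\|_2 \geq \sigma_r(W)\|x\|_2$ applied to the test vectors $e_i$ and $e_i - e_j$ (the paper unpacks this via the SVD $W = U\Sigma V^T$ and the orthonormality of $V$, whereas you invoke the equivalent variational characterization $\sigma_r(W) = \min_{\|x\|_2=1}\|Wx\|_2$, valid here since $m \geq r$ per the paper's notation). No gap.
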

\begin{proof} 
We have to show that $||w_i||_2 \geq \sigma_r(W) \text{ for all } i$, and  
$||w_i - w_j||_2 \geq \sqrt{2} \sigma_r(W)$  for all $i \neq j$. 
Let $(U,\Sigma,V) \in \mathbb{R}^{m \times r} \times \mathbb{R}^{r \times r} \times \mathbb{R}^{r \times r}$ be a compact singular value decomposition of $W = U \Sigma V^T$, where $U$ and $V$ are orthonormal and $\Sigma$ is diagonal with the singular values of $W$ on the diagonal. Then 
\[
||w_i||_2 = || U \Sigma v_i||_2 = ||\Sigma v_i||_2 \geq \sigma_r(W) ||v_i||_2 =  \sigma_r(W), 
\]
while 
\begin{align*}
||w_i - w_j||_2 & = || U \Sigma (v_i-v_j)||_2   = ||\Sigma (v_i-v_j)||_2 \\ 
& \geq  \sigma_r(W) ||v_i-v_j||_2 =   \sqrt{2}  \sigma_r(W). 
\end{align*}
\end{proof}

The ratio $\frac{K(W)}{\omega(W)}$ is then closely related to the conditioning of matrix $W \in \mathbb{R}^{m \times r}$. In fact, we have seen that $\omega(W) \geq \sigma_r(W)$ while, by definition, $\sigma_1(W) \geq K(W) \geq \nu(W) \geq \omega(W)$ so that  
\[
1 \leq \frac{K(W)}{\omega(W)} \leq \frac{\sigma_1(W)}{\sigma_r(W)} = \kappa(W). 
\]
In particular, this inequality implies that if $\kappa(W) = 1$ then $\frac{K(W)}{\omega(W)} = 1$.

\subsubsection{Error Bound for Algorithm~\ref{sepnmf}}

We have shown that, if the input matrix $M'$ has the form 
\[
M' = [W, Q] [I_r, H'] + N, 
\]
where $Q$ and $N$ are sufficiently small and the sum of the entries of each column of $H' \geq 0$ is smaller than one, 
then Algorithm~\ref{sepnmf} extracts one column of  $M'$ which is close to a column of $W$; cf.\@ Theorem~\ref{Th2}.   
We now show that, at each step of Algorithm~\ref{sepnmf}, the residual matrix satisfies these assumptions so that we can prove the result by induction.

We first give some useful lemmas; see \cite{GV96} and the references therein.  


\begin{lemma}[Cauchy Interlacing Theorem] \label{gv2}  
Let $W \in \mathbb{R}^{m \times r}$ and $P = \prod_{i=1}^k (I-u_iu_i^T)$ where $u_i \in \mathbb{R}^{m}$ with $||u_i||_2 = 1$ for all $i$, and $k \leq r-1$. Then 
\[
\sigma_{1}(W) \geq \sigma_{1}(P W) \geq \sigma_{r-k}(P W) \geq \sigma_r(W). 
\]
\end{lemma}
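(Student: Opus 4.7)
The chain of three inequalities splits into three separate claims, of which only the last requires real work.

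For the upper bound $\sigma_1(W) \geq \sigma_1(PW)$, I would use submultiplicativity of the spectral norm together with the fact that each factor $I - u_i u_i^T$ is an orthogonal projection onto the hyperplane $u_i^{\perp}$ and therefore has operator norm $1$; so $\|P\|_2 \leq 1$, and hence $\sigma_1(PW) = \|PW\|_2 \leq \|P\|_2 \, \sigma_1(W) \leq \sigma_1(W)$. The middle inequality $\sigma_1(PW) \geq \sigma_{r-k}(PW)$ is just the convention that singular values are listed in non-increasing order.

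The substance of the lemma is the lower bound $\sigma_{r-k}(PW) \geq \sigma_r(W)$. My plan is to use the Courant--Fischer characterization
\[
\sigma_{r-k}(PW) \; = \; \max_{\dim V = r-k} \; \min_{x \in V,\, \|x\|_2 = 1} \|PWx\|_2,
\]
and to exhibit one concrete subspace $V \subseteq \mathbb{R}^r$ of dimension at least $r-k$ on which $\|PWx\|_2 \geq \sigma_r(W)$. The key observation is that $P$ fixes every vector in $F := \mathrm{span}(u_1, \dots, u_k)^{\perp}$: for $y \in F$ we have $u_i^T y = 0$ and so $(I - u_i u_i^T) y = y$ for each $i$, which iterates to $Py = y$. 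Since $\dim F \geq m-k$, the natural candidate is
\[
V \; := \; \{ x \in \mathbb{R}^r : Wx \in F \} \; = \; \{ x \in \mathbb{R}^r : U^T W x = 0 \} \; = \; \bigl(\mathrm{Im}(W^T U)\bigr)^{\perp},
\]
where $U = [u_1\ \cdots\ u_k] \in \mathbb{R}^{m \times k}$. Because $\mathrm{Im}(W^T U) \subseteq \mathbb{R}^r$ has dimension at most $k$, we get $\dim V \geq r-k$. For every $x \in V$ we have $PWx = Wx$, and hence $\|PWx\|_2 = \|Wx\|_2 \geq \sigma_r(W)\, \|x\|_2$. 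Plugging any $(r-k)$-dimensional subspace of $V$ into the Courant--Fischer maximum yields the desired inequality.

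The main obstacle I foresee is purely bookkeeping: one has to keep the dimension counts straight and carefully verify that $W$ sends $V$ into $F$. The hypothesis $k \leq r-1$ enters precisely to guarantee $r-k \geq 1$ so that $\sigma_{r-k}$ is defined. Note that the argument does not require the vectors $u_1, \dots, u_k$ to be orthogonal (although inside Algorithm~\ref{sepnmf} they happen to be, by construction of the successive residuals), nor does it need $W$ to have full column rank: if $\sigma_r(W) = 0$ the claim is vacuous.
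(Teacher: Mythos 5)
Your proof is correct. Note, however, that the paper does not actually prove this lemma: it is stated as a known fact with a pointer to Golub and Van Loan \cite{GV96}, so there is no internal proof to compare against. Your argument is a clean, self-contained substitute. All three inequalities check out: $\|P\|_2\le 1$ because each factor $I-u_iu_i^T$ is an orthogonal projector of norm at most one; the middle inequality is the ordering convention; and for the substantive bound the subspace $V=\ker(U^TW)$ with $U=[u_1\ \cdots\ u_k]$ has dimension at least $r-k$, is mapped by $W$ into the fixed space $F=\mathrm{span}(u_1,\dots,u_k)^{\perp}$ of $P$, and hence feeds into Courant--Fischer exactly as you describe. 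It is worth pointing out that your route is in fact slightly more general than the ``interlacing'' route the lemma's name suggests: the textbook argument would use that the $u_i$ are orthonormal (as they are in Algorithm~\ref{sepnmf}, since each new direction lies in the range of the previous projection), so that $P=I-UU^T$ is itself an orthogonal projector and $(PW)^T(PW)=W^TW-(U^TW)^T(U^TW)$ is a rank-$\le k$ negative semidefinite perturbation of $W^TW$, after which Weyl/Cauchy interlacing of eigenvalues gives $\sigma_{r-k}(PW)\ge\sigma_r(W)$. Your min--max argument avoids that structural assumption entirely, at no extra cost. One small bookkeeping remark: the hypothesis $k\le r-1$ guarantees $r-k\ge 1$, but your subspace $V$ could have dimension larger than $r-k$; restricting to any $(r-k)$-dimensional subspace of $V$, as you do, is exactly the right fix.
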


\begin{lemma}[Singular Value Perturbation, Weyl] \label{weyl}
Let $M' = M + N \in \mathbb{R}^{r \times n}$ with $r \leq n$. Then, for all $1 \leq i \leq r$, 
\[
\left| \sigma_i(M) - \sigma_i(M') \right| \; \leq \; \sigma_1(N) = ||N||_2 \, . 
\]
\end{lemma}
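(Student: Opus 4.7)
The plan is to deduce this from the Courant--Fischer min--max characterization of singular values. For $A \in \mathbb{R}^{r \times n}$ with $r \leq n$, we have
\[
\sigma_i(A) \;=\; \max_{\substack{S \subseteq \mathbb{R}^n \\ \dim S = i}} \; \min_{\substack{x \in S \\ \|x\|_2 = 1}} \|Ax\|_2,
\]
so the strategy is to control $\|M'x\|_2$ uniformly in $x$ by $\|Mx\|_2$ and then pass through the outer min/max.

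Concretely, I would first fix a unit vector $x \in \mathbb{R}^n$ and apply the triangle inequality together with the definition of the spectral norm:
\[
\|M'x\|_2 \;=\; \|(M+N)x\|_2 \;\leq\; \|Mx\|_2 + \|Nx\|_2 \;\leq\; \|Mx\|_2 + \|N\|_2.
\]
Given any $i$-dimensional subspace $S \subseteq \mathbb{R}^n$, taking the minimum over unit vectors in $S$ preserves the inequality (since the additive term $\|N\|_2$ is constant in $x$), yielding
\[
\min_{\substack{x \in S \\ \|x\|_2 = 1}} \|M'x\|_2 \;\leq\; \min_{\substack{x \in S \\ \|x\|_2 = 1}} \|Mx\|_2 + \|N\|_2.
\]
Next I would take the maximum over all $i$-dimensional subspaces $S$ on both sides; by the min--max formula this gives
\[
\sigma_i(M') \;\leq\; \sigma_i(M) + \|N\|_2.
\]

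To close the argument, I would exchange the roles of $M$ and $M'$: writing $M = M' + (-N)$ with $\|-N\|_2 = \|N\|_2$ and repeating the same reasoning produces $\sigma_i(M) \leq \sigma_i(M') + \|N\|_2$. Combining the two inequalities yields $|\sigma_i(M) - \sigma_i(M')| \leq \|N\|_2 = \sigma_1(N)$, as required.

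There is no real obstacle here; the only point worth being careful about is that the triangle inequality step must be performed \emph{before} taking the minimum over $S$, so that the noise contribution appears as an additive constant and therefore commutes with the $\min$ and the subsequent $\max$. One could equivalently prove the statement by invoking Weyl's eigenvalue inequality for Hermitian matrices applied to the dilation $\bigl(\begin{smallmatrix} 0 & A \\ A^\top & 0 \end{smallmatrix}\bigr)$, but the direct min--max argument above is shorter and self-contained.
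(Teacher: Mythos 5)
Your proof is correct. Note that the paper itself does not prove this lemma at all: it is stated as a classical fact (Weyl's singular value perturbation inequality) with a pointer to Golub and Van Loan \cite{GV96}, so there is no in-paper argument to compare against. Your derivation via the Courant--Fischer characterization $\sigma_i(A) = \max_{\dim S = i} \min_{x \in S,\, \|x\|_2=1} \|Ax\|_2$ is the standard textbook route, and every step checks out: the triangle inequality gives $\|M'x\|_2 \leq \|Mx\|_2 + \|N\|_2$ pointwise, the additive constant passes through the inner $\min$ and outer $\max$, and the symmetric argument with $M = M' + (-N)$ closes the absolute value. Your remark about the order of operations (triangle inequality before minimizing over $S$) is the right point to flag, and the alternative via Weyl's Hermitian inequality applied to the dilation $\bigl(\begin{smallmatrix} 0 & A \\ A^\top & 0 \end{smallmatrix}\bigr)$ is indeed the other standard proof. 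Nothing is missing.
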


\begin{lemma} \label{lemsv} 
Let $W = [W_1,\, W_2] \in \mathbb{R}^{m \times r}$ where $W_1 \in \mathbb{R}^{m \times r_1}$ and $W_2 \in \mathbb{R}^{m \times r_2}$.  
Let also $P = \prod_{i=1}^{r_2}\left(I-\frac{u_iu_i^T}{||u_i||_2^2}\right)$ be such that $\max_i ||PW_2(:,i)||_2 \leq \bar{\epsilon}$ for some $\bar{\epsilon} \geq 0$. 
Then, 
\[
\sigma_{r_1}(PW_1) \geq \sigma_r(W) - \sqrt{r_2} \, \bar{\epsilon}. 
\]
\end{lemma}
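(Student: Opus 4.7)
The plan is to chain together Weyl's perturbation bound (Lemma~\ref{weyl}) and the Cauchy interlacing theorem (Lemma~\ref{gv2}), using the assumption on $PW_2$ to bridge $\sigma_{r_1}(PW_1)$ and $\sigma_{r_1}(PW)$.

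First I would turn the columnwise bound on $PW_2$ into a spectral-norm bound. Since every column of $PW_2$ has $\ell_2$-norm at most $\bar{\epsilon}$, we get
\[
\|PW_2\|_2 \;\leq\; \|PW_2\|_F \;=\; \sqrt{\sum_{i=1}^{r_2}\|PW_2(:,i)\|_2^2} \;\leq\; \sqrt{r_2}\,\bar{\epsilon}.
\]
Next I would view $PW = [PW_1,\, PW_2]$ as a perturbation of $[PW_1,\, 0_{m\times r_2}]$: the difference $[0,\, PW_2]$ has spectral norm equal to $\|PW_2\|_2 \leq \sqrt{r_2}\,\bar{\epsilon}$. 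Applying Lemma~\ref{weyl} to these two $m\times r$ matrices at index $r_1$, and noting that the singular values of $[PW_1,\, 0]$ are exactly those of $PW_1$ padded with $r_2$ zeros (so that $\sigma_{r_1}([PW_1,0]) = \sigma_{r_1}(PW_1)$), yields
\[
\sigma_{r_1}(PW_1) \;\geq\; \sigma_{r_1}(PW) - \sqrt{r_2}\,\bar{\epsilon}.
\]

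Then I would apply the Cauchy interlacing theorem to $PW$. Writing $P = \prod_{i=1}^{r_2}(I-\tilde u_i\tilde u_i^T)$ with $\tilde u_i = u_i/\|u_i\|_2$ (unit vectors), Lemma~\ref{gv2} with $k=r_2$ gives
\[
\sigma_{r-r_2}(PW) \;\geq\; \sigma_r(W), \quad \text{i.e., } \sigma_{r_1}(PW) \;\geq\; \sigma_r(W),
\]
since $r_1 = r - r_2$. Combining the two displayed inequalities gives the desired bound $\sigma_{r_1}(PW_1)\geq \sigma_r(W)-\sqrt{r_2}\,\bar{\epsilon}$.

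The only delicate point is making sure the hypotheses of the two cited lemmas line up: Lemma~\ref{gv2} is stated for projections built from \emph{unit} vectors (handled by the rescaling above), and Weyl needs identical matrix dimensions, which is why I pad $PW_1$ with a zero block of width $r_2$ rather than comparing $PW_1$ to $PW$ directly. Neither step introduces difficulty, so I do not anticipate a real obstacle here; the content of the lemma is essentially that "the small columns $PW_2$ can only shave off at most $\sqrt{r_2}\,\bar{\epsilon}$ from the smallest singular value of $PW$, while $PW$ itself inherits $\sigma_r(W)$ from Cauchy interlacing."
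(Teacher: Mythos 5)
Your proposal is correct and follows essentially the same route as the paper: pad $PW_1$ with a zero block, apply Weyl's bound with $\|PW_2\|_2 \leq \sqrt{r_2}\,\bar{\epsilon}$, and then invoke the Cauchy interlacing lemma to get $\sigma_{r_1}(PW)\geq\sigma_r(W)$. The only cosmetic difference is that you justify the norm bound via $\|PW_2\|_2\leq\|PW_2\|_F$ while the paper cites $\|A\|_2\leq\sqrt{n}\max_i\|A(:,i)\|_2$ directly; these are the same estimate.
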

\begin{proof} We have 
\begin{align*}
\sigma_{r_1}(PW_1) 
&  \hspace{0.6cm} = \hspace{0.6cm} \sigma_{r_1}([PW_1, 0_{m \times r_2}]) \\
&  \underset{(Lemma~\ref{weyl})}{\geq} \sigma_{r_1}([PW_1, PW_2]) - ||PW_2||_2 \\ 
&  \hspace{0.6cm} \geq \hspace{0.6cm}   \sigma_{r_1}(PW) - \sqrt{r_2} \max_i ||PW_2(:,i)||_2 \\ 
&  \underset{(Lemma~\ref{gv2})}{\geq} \sigma_{r}(W) - \sqrt{r_2} \bar{\epsilon} \, .
\end{align*}
The second inequality follows from the fact that $||A||_2 \leq \sqrt{n} \max_i ||A(:,i)||_2$ for any matrix $A \in \mathbb{R}^{m \times n}$. 
\end{proof}

We can now prove the main theorem of the paper which shows that, given a noisy separable matrix $M' = M+N = WH+N$ where $M$ satisfies Assumption~\ref{asssep}, Algorithm~\ref{sepnmf} is able to identify approximately the columns of $W$.

\begin{theorem} \label{threc} 
Let $M' = M + N = W H + N \in \mathbb{R}^{m \times n}$ where $M$ satisfies  Assumption~\ref{asssep} with $W \in \mathbb{R}^{m \times r}$, $r \geq 2$, and $H = [I_r \; H']$, and let $f$ satisfy Assumption~\ref{fass1} with  strong convexity parameter $\mu$ and its gradient has Lipschitz constant $L$. Let also $||n_i||_2 \leq \epsilon$ for all $i$ with  
\begin{equation} \label{sepnmfbound} 
\epsilon < \sigma_r(W) \, 
\min\left( \frac{1}{2 \sqrt{r-1}},\frac{1}{4}\sqrt{\frac{\mu}{L}}\right) 
\left( 1 + 80   \frac{K(W)^2}{\sigma_r^2(W)}  \frac{L}{\mu} \right)^{-1}, 
\end{equation}
and $J$ be the index set of cardinality $r$ extracted by Algorithm~\ref{sepnmf}. Then there exists a permutation $P$ of $\{1,2,\dots,r\}$ such that 
\begin{align*}
\max_{1 \leq j \leq r} ||m'_{J(j)} - w_{P(j)} ||_2 
& \leq \bar{\epsilon} 
= \epsilon \left(1+ 80 \frac{K(W)^2}{\sigma_r^2(W)} \frac{L}{\mu}\right) .  
\end{align*}
\end{theorem}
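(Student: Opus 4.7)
I would prove the theorem by induction on the step $j \in \{1,\dots,r\}$ of Algorithm~\ref{sepnmf}. The induction hypothesis is that after $j$ steps, there exists an injection $P:\{1,\dots,j\}\to\{1,\dots,r\}$ such that $\|m'_{J(k)}-w_{P(k)}\|_2\le\bar\epsilon$ for $k=1,\dots,j$, so that the current residual $R^{(j)}=\tilde P_j M'$, where $\tilde P_j=\prod_{k=1}^{j}(I-u_ku_k^T/\|u_k\|_2^2)$ and $u_k=m'_{J(k)}$, can be placed in the exact form required by Theorem~\ref{Th2}. Concretely, I partition $W=[W_1,W_2]$ with $W_2$ consisting of the columns indexed by $P(J)$, so that (after an innocuous column permutation absorbed into $H$)
\[
R^{(j)}=[\tilde W,\tilde Q]\,[I_r,\tilde H']+\tilde N,\qquad \tilde W:=\tilde P_jW_1,\ \tilde Q:=\tilde P_jW_2,\ \tilde N:=\tilde P_jN.
\]

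The heart of the argument is estimating the four quantities that Theorem~\ref{Th2} consumes. For $\tilde Q$: since $\tilde P_j u_k=0$, for $i=P(k)\in P(J)$ I write $\tilde P_jw_i=\tilde P_j(w_i-m'_{J(k)})$ and use non-expansiveness of the projection together with the induction hypothesis to get $K(\tilde Q)\le\bar\epsilon$. For $\tilde N$: non-expansiveness gives $K(\tilde N)\le\epsilon$. For $\tilde W$: $K(\tilde W)\le K(W)$ trivially, and the lower bound $\omega(\tilde W)\ge\sigma_{r-j}(\tilde W)$ from Lemma~\ref{gv1} combined with Lemma~\ref{lemsv} applied to $W=[W_1,W_2]$ (with $W_2$ playing the role of the ``small after projection'' block, bounded by the Step for $\tilde Q$) yields
\[
\omega(\tilde W)\ \ge\ \sigma_r(W)-\sqrt{j}\,\bar\epsilon\ \ge\ \sigma_r(W)-\sqrt{r-1}\,\bar\epsilon.
\]
From the explicit bound \eqref{sepnmfbound} and the definition $\bar\epsilon=\epsilon(1+80(K(W)/\sigma_r(W))^2 L/\mu)$, one verifies successively that $\sqrt{r-1}\,\bar\epsilon\le\sigma_r(W)/2$ (hence $\omega(\tilde W)\ge\sigma_r(W)/2$), that $\nu(\tilde W)\ge\omega(\tilde W)>2\sqrt{L/\mu}\,\bar\epsilon\ge 2\sqrt{L/\mu}\,K(\tilde Q)$, and that $\epsilon\le\mu\,\omega(\tilde W)^2/(20\,K(\tilde W)\,L)$. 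These are exactly the hypotheses of Theorem~\ref{Th2}.

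Applying Theorem~\ref{Th2} to $R^{(j)}$ then yields an index $J(j+1)$ and a column $w_p$ of $W_1$ (i.e., $p\notin P(J)$) with
\[
\|m'_{J(j+1)}-w_p\|_2\ \le\ \epsilon\left(1+20\frac{K(\tilde W)^2}{\omega(\tilde W)^2}\frac{L}{\mu}\right)\ \le\ \epsilon\left(1+80\frac{K(W)^2}{\sigma_r(W)^2}\frac{L}{\mu}\right)=\bar\epsilon,
\]
using $K(\tilde W)\le K(W)$ and $\omega(\tilde W)\ge\sigma_r(W)/2$. Extending $P$ by $P(j+1):=p$ closes the induction. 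The main obstacle is the bookkeeping in this last calibration: the small projected columns of already-extracted endmembers play the role of $Q$ in Theorem~\ref{Th2} while the true perturbation plays the role of $N$, and the constants in \eqref{sepnmfbound} (in particular the $\sqrt{r-1}$ factor and the coefficient $80$) must be tight enough that the \emph{same} error bound $\bar\epsilon$ propagates unchanged through all $r$ iterations, even though $\omega(\tilde W)$ and $K(\tilde W)$ drift relative to $\sigma_r(W)$ and $K(W)$ as the algorithm proceeds.
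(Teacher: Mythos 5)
Your plan is essentially the paper's own proof: the same induction maintaining the residual in the form $[\tilde W,\tilde Q]\,H+\tilde N$ with $K(\tilde Q)\le\bar\epsilon$ and $K(\tilde N)\le\epsilon$ via non-expansiveness of the orthogonal projections, the same use of Lemmas~\ref{gv1} and~\ref{lemsv} to get $\omega(\tilde W)\ge\sigma_r(W)-\sqrt{r-1}\,\bar\epsilon\ge\sigma_r(W)/2$, and the same application of Theorem~\ref{Th2} at each step with identical calibration of the constants in~\eqref{sepnmfbound}. The only step to tighten is your final display: Theorem~\ref{Th2} applied to $R^{(j)}$ bounds the distance in the \emph{projected} space (i.e., $\|r^{(j)}_{J(j+1)}-\tilde P_j w_p\|_2$), so to obtain the claimed bound on $\|m'_{J(j+1)}-w_p\|_2$ you must, as the paper does, pull the conclusion $h_i(p)\ge 1-\delta$ of~\eqref{eqmdel2} back through the unchanged matrix $H$ to get $\|m_{J(j+1)}-w_p\|_2\le 2\delta K(W)$, hence $\|m'_{J(j+1)}-w_p\|_2\le\epsilon+2\delta K(W)$, which gives the same constant $80$.
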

\begin{proof} Let us prove the result by induction. First, let us define the residual matrix $R^{(k)}$ obtained after $k$ steps of Algorithm~\ref{sepnmf} as follows: 
\[
R^{(0)} = M', \qquad R^{(k+1)} = P^{(k)} R^{(k)} \; \text{ for } 1 \leq k \leq r-1 , 
\]
 with $P^{(k)} = \left(I-\frac{uu^T}{||u||_2^2}\right)$ is the orthogonal projection performed at step 5 of Algorithm~\ref{sepnmf} where $u$ is the extracted column of $R^{(k)}$, that is, $u = r^{(k)}_i$ for some $1 \leq i \leq n$.  \vspace{0.1cm}
 
Then, let us assume that the residual $R^{(k)} \in \mathbb{R}^{m \times n}$ has the following form
\[
R^{(k)} = [W^{(k)}, \, Q^{(k)}] H + N^{(k)}, 
\] 
where $W^{(k)} \in \mathbb{R}^{m \times (r-k)}$, $Q^{(k)} \in \mathbb{R}^{m \times k}$,  $K(Q^{(k)}) \leq \bar{\epsilon}$, $\nu(W^{(k)}) > 2 \sqrt{\frac{L}{\mu}} \bar{\epsilon}$, and $K(N^{(k)}) \leq \epsilon \leq \frac{\omega(W^{(k)})^2 \mu}{20 K(W^{(k)}) L }$. 
Let us show that $R^{(k+1)}$ satisfies the same conditions as $R^{(k)}$. 
By assumption, $R^{(k)}$ satisfies the conditions of Theorem~\ref{Th2}: the $(k+1)$th index extracted by Algorithm~\ref{sepnmf}, say $i$, satisfies 
\begin{align*}
||r^{(k)}_i - w^{(k)}_p||_2 
&
\leq \epsilon \left(1 + 20 \frac{K(W^{(k)})^2}{\omega(W^{(k)})^2} \frac{L}{\mu}\right),  
\end{align*} 
for some $1 \leq p \leq r-k$. 
Let us assume without loss of generality that $p = r-k$. 
The next residual $R^{(k+1)}$ has the form 
\begin{align} \label{resi}
R^{(k+1)} 
& = [\underbrace{P^{(k)}W^{(k)}(:,1\text{:}r\text{$-$}k\text{$-$}1)  }_{W^{(k+1)}}, \; 
\underbrace{P^{(k)}w^{(k)}_{p}, \, P^{(k)}Q^{(k)}}_{Q^{(k+1)}}] H \nonumber \\ 
& \hspace{0.7cm} + \underbrace{P^{(k)}N^{(k)}}_{N^{(k+1)}} \; ,  
\end{align} 
where 
\begin{itemize}

\item $K(N^{(k+1)}) \leq K(N^{(k)}) \leq \epsilon$ and $K(P^{(k)}Q^{(k)}) \leq K(Q^{(k)}) \leq \bar{\epsilon}$ because an orthogonal projection can only reduce the $\ell_2$-norm of a vector.

\item $||P^{(k)}w^{(k)}_{p}||_2 \leq \bar{\epsilon} = \epsilon \left(1 + 80 
\frac{K(W)^2}{\sigma_r^2(W)} \frac{L}{\mu}\right)$ since 
\begin{align*}
||P^{(k)}w^{(k)}_{p}||_2 & 
\hspace{0.4cm} \leq \hspace{0.4cm} 
||r^{(k)}_i - w^{(k)}_p||_2 \\ 
& \underset{(Thm~\ref{Th2})}{\leq}  
\epsilon \left(1 + 20 \frac{K(W^{(k)})^2}{\omega(W^{(k)})^2} \frac{L}{\mu}\right)  
\end{align*}
where the first inequality follows from $P^{(k)}w^{(k)}_{p}$ being the projection of $w^{(k)}_p$ onto the orthogonal complement of $r^{(k)}_i$. Moreover, 
\begin{equation} \label{Kws}
\frac{K(W^{(k)})^2}{\omega(W^{(k)})^2} \leq 4 \frac{K(W)^2}{\sigma_r^2(W)}. 
\end{equation} 
In fact, $K(W^{(k)}) \leq K(W)$ because of the orthogonal projections, while  $\omega(W^{(k)}) \geq \frac{1}{2} \sigma_r(W)$ follows from 
\begin{align*}
\omega(W^{(k)}) 
& \underset{(Lemma~\ref{gv1})}{\geq} \sigma_{r-k}(W^{(k)}) \\ 
&  \underset{(Lemma~\ref{lemsv})}{\geq} \sigma_r(W) - \sqrt{k} \bar{\epsilon} \\ 
&  \hspace{0.6cm} \geq \hspace{0.6cm} \sigma_r(W) - \sqrt{r-1} \bar{\epsilon} \\
&  \hspace{0.6cm} \geq \hspace{0.6cm} \frac{1}{2} \sigma_r(W). 
\end{align*}
Lemma~\ref{lemsv} applies since $K(Q^{(k)}) \leq \bar{\epsilon}$. 
The last inequality 
follows from $\bar{\epsilon} \leq \frac{\sigma_r(W)}{2 \sqrt{r-1}}$ since 
\[ 
\epsilon \leq \frac{\sigma_r(W)}{2\sqrt{r-1}} \left( 1 + 80  \frac{K(W)^2}{\sigma_r^2(W)}  \frac{L}{\mu} \right)^{-1}.  
\]
\end{itemize} 

 For $R^{(k+1)}$ to satisfy the same conditions as $R^{(k)}$, it remains to show that $\nu(W^{(k+1)}) > 2 \sqrt{\frac{L}{\mu}} \bar{\epsilon}$ and  $\epsilon \leq \frac{\omega(W^{(k+1)})^2 \mu}{20 K(W^{(k+1)}) L }$. Let us show that these hold for all $k =  0, 1, \dots, r-1$ :  
\begin{itemize}
\item Since $\nu(W^{(k)}) \geq \omega(W^{(k)}) \geq \frac{1}{2}\sigma_r(W)$ (see above), $\nu(W^{(k)}) > 2 \sqrt{\frac{L}{\mu}} \bar{\epsilon}$ 
is implied by $\frac{1}{2} \sigma_r(W) > 2  \sqrt{\frac{L}{\mu}} \bar{\epsilon}$, that is,   
\[
\epsilon < \frac{1}{4} \sqrt{\frac{\mu}{L}} \sigma_r(W) \left( 1 + 80   \frac{K(W)^2}{\sigma_r^2(W)} \frac{L}{\mu} \right)^{-1}. 
\]

\item Using Equation~\eqref{Kws}, we have 
\begin{align*}
\epsilon & \leq \frac{\sigma_r(W)}{2\sqrt{r-1}} \left( 1 + 80  \frac{K(W)^2}{\sigma_r^2(W)}  \frac{L}{\mu} \right)^{-1} \\
& \leq \sigma_r(W) \frac{\sigma_r(W) \mu}{80 K(W) L } \leq \frac{\omega(W^{(k)})^2 \mu}{20 K(W^{(k)}) L } . 
\end{align*}
\end{itemize}

By assumption on the matrix $M'$, these conditions are satisfied at the first step of the algorithm (we actually have that $Q^{(0)}$ is an empty matrix), so that, by induction, all the residual matrices satisfy these conditions. 
Finally, Theorem~\ref{Th2} implies that the index $i$ extracted by Algorithm~\ref{sepnmf} at the $(k+1)$th step  satisfies 
\[
r^{(k)}_i = [W^{(k)}, Q^{(k)}] h_i + n_i 
, \quad \text{ where }  h_i(p) \geq 1 - \delta^{(k)},
\] 
$\delta^{(k)} = \frac{10 \epsilon K(W^{(k)})L}{\omega(W^{(k)})^2 \mu}$, and  $p = r-k$ without loss of generality. Since the matrix $H$ is unchanged between each step, this implies 
$m_i = W h_i$  where   $h_i(p) \geq 1 - \delta^{(k)}$,  
hence  
\begin{align*}
||m'_i - w_p||_2 & \hspace{0.4cm} = \hspace{0.4cm} ||m'_i - m_i + m_i - w_p||_2 \\ 
& \hspace{0.4cm} \leq \hspace{0.4cm} \epsilon + ||m_i - w_p||_2 \\
&\hspace{0.4cm} \leq \hspace{0.4cm} \epsilon + 2 \delta^{(k)} K(W) \\
& \hspace{0.4cm} = \hspace{0.4cm} \epsilon + 20 \frac{\epsilon K(W^{(k)})L}{\omega(W^{(k)})^2 \mu} K(W) \\
& \underset{(Eq.~\ref{Kws})}{\leq} \epsilon \left(1 + 80 \frac{K(W)^2}{\sigma_r^2(W)} \frac{L}{\mu}\right). 
\end{align*} 
The second inequality is obtained using $m_i = Wh_i = h_i(p) w_p + \sum_{k \neq p} h_i(k) w_k$ and $\sum_{k \neq p} h_i(k) \leq 1-h_i(p)$ so that 
\begin{align*}
||w_p - m_i||_2 & = ||(1- h_i(p)) w_p - \sum_{k \neq p} h_i(k) w_k ||_2 \\
&  \leq 2 (1- h_i(p))\max_k ||w_k||_2  \leq 2 \delta^{(k)} K(W). 
\end{align*} 
\end{proof}

\subsection{Bounds for Separable NMF and Comparison with the Algorithms of Arora~et~al.~\cite{AGKM11} and Bittorf~et~al.~\cite{BRRT12}} \label{compa}

Given a noisy separable matrix \mbox{$M' = M + N = W H + N$}, we have shown that Algorithm~\ref{sepnmf} is able to approximately recover the columns of the matrix $W$. In order to compare the bound of Theorem~\ref{threc} with the ones obtained in \cite{AGKM11} and\footnote{The error bounds in \cite{BRRT12} were only valid for separable matrices without duplicates of the columns of $W$. They were later improved and generalized to any separable matrix in \cite{G13}.} \cite{BRRT12}, let us briefly recall  the results obtained in both papers: Given a separable matrix $M = WH$, 
the parameter $\alpha$ is defined as the minimum among the $\ell_1$ distances between each column of $W$ and its projection onto the convex hull of the other columns of $W$. 
Without loss of generality, it is assumed that the $\ell_1$ norm of the columns of $W$ is equal to one (by normalizing the columns of $M$), hence $\alpha \leq 2$. 
Then, given that the noise $N$ added to the separable matrix $M$ satisfies  
\[
\epsilon_{1} = \max_i ||n_i||_1 \leq \mathcal{O}({\alpha^2}) , 
\]
Arora et al.\@ \cite{AGKM11} identify a matrix $U$ such that 
\[
\max_{1 \leq k \leq r} \min_{1 \leq j \leq r} ||u_j - w_k||_1 \leq \mathcal{O}\left( \frac{\epsilon_{1}}{\alpha} \right). 
\]
 The algorithm of Bittorf et al.\@ \cite{BRRT12} identifies a matrix $U$ satisfying 
 \[
 \max_{1 \leq k \leq r} \min_{1 \leq j \leq r} ||u_j - w_k||_1  \leq \mathcal{O}\left( \frac{r \, \epsilon_{1}}{\alpha} \right), 
 \] 
 given that $\epsilon_{1} \leq \mathcal{O}\left(\frac{\alpha \gamma_1}{r}\right)$ where $\gamma_1 = \min_{i \neq j} ||w_i - w_j||_1 \geq \alpha$;  see \cite{G13}. 
Let us compare these bounds to ours. Since the $\ell_1$ norm of the columns of $W$ is equal to one, we have $\sigma_r(W) \leq K(W) \leq 1$. Moreover, denoting $p_i$ the orthogonal projection of $w_i$ onto the linear subspace generated by the columns of $W$ but the $i$th, we have 
\[
\sigma_r(W) \leq \min_i ||w_i-p_i||_2 \leq \min_i ||w_i-p_i||_1 \leq \alpha. 
\]
By Theorem~\ref{threc}, Algorithm~\ref{sepnmf} therefore requires  
\[
\epsilon_2 = \max_i ||n_i||_2 
\leq \mathcal{O}\left(\frac{\sigma_r^3(W)}{\sqrt{r}} \right) 
\leq \mathcal{O}\left(\frac{\alpha^3}{\sqrt{r}} \right) ,  
\]
to obtain a matrix $U$ such that 
\[
\max_{1 \leq k \leq r} \min_{1 \leq j \leq r} ||u_j - w_k||_2 
\leq \mathcal{O}\left(\frac{\epsilon_2}{ \sigma_r^{2}(W)}\right) . 
\] 
This shows that the above bounds are tighter, as they only require the noise to be bounded above by a constant proportional to~$\alpha^2$ to  guarantee an NMF with error proportional to~$\frac{\epsilon_1}{\alpha}$. 
In particular, if $W$ is not full rank, Algorithm~\ref{sepnmf} will fail to extract more than $\rank(W)$ columns of $W$, while the value of $\alpha$ can be much larger than zero implying that the algorithms from \cite{AGKM11, BRRT12} will still be robust to a relatively large noise. 

To conclude, the techniques in \cite{AGKM11, BRRT12} based on linear programming lead to better error bounds.
However, there are computationally much more expensive (at least quadratic in $n$, while Algorithm~\ref{sepnmf} is linear in $n$, cf.\@ Section~\ref{pwork}), and have the drawback that some parameters have to be estimated in advance: the noise level $\epsilon_1$, and 
\begin{itemize}
\item the parameter $\alpha$ for Arora et al.\@ \cite{AGKM11} (which is rather difficult to estimate as $W$ is unknown), 
\item the factorization rank $r$ for Bittorf et al.\@ \cite{BRRT12}\footnote{In their incremental gradient descent algorithm, the parameter $\epsilon$ does not need to be estimated. However, other parameters need to be tuned, namely, primal and dual step sizes.},   
hence the solution has to be recomputed from scratch when the value of $r$ is changed (which often happens in practice as the number of columns to be extracted is typically estimated with a trial and error approach). 

\end{itemize} 
Moreover, the algorithms from \cite{AGKM11, BRRT12} heavily rely on the separability assumption while Algorithm~\ref{sepnmf} still makes sense even if the separability assumption is not satisfied; see Section~\ref{l2norm}. 
Table~\ref{comptable} summarizes these results. Note that we keep the analysis simple and only indicate the growth in terms of $n$.   
The reason is threefold: (1) in many applications (such as hyperspectral unmixing), $n$ is much larger than $m$ and $r$, (2) a more detailed comparison of the running times would be possible (that is, in terms of $m$, $n$, and $r$) but is not straightforward as it depends on
the algorithm used to solve the linear programs (and possibly on the parameters $\alpha$ and $\epsilon_1$), and (3) both algorithms \cite{AGKM11,BRRT12} are at least quadratic in $n$ (for example, the computational cost of each iteration of the first-order method proposed in \cite{BRRT12} is proportional to $mn^2$, so that the complexity is linear in $m$).  
\begin{table}[ht!] 
\begin{center}
\caption{Comparison of robust algorithms for separable NMF.} 
\begin{tabular}{|c||c|c|c|}
\hline
					& Arora et al. \cite{AGKM11}  &   Bittorf et al. \cite{BRRT12} & Algorithm~\ref{sepnmf}   \\ 
\hline  \hline  
Flops    &	 $\Omega(n^2)$   &   $\Omega(n^2)$ & $\mathcal{O}(n)$  \\ [5pt] 
Memory    &	 $\mathcal{O}(n)$   &   $\mathcal{O}(n^2)$ & $\mathcal{O}(n)$  \\ [5pt] 
Noise    &	 $\epsilon_1 \leq \mathcal{O}(\alpha^2)$   &  $\epsilon_1 \leq \mathcal{O}\left(\frac{\alpha \gamma_1}{r}\right)$  
&  $\epsilon_2 \leq \mathcal{O}\left( \frac{\sigma_r^3(W)}{\sqrt{r}}\right)$  \\ [5pt] 
Error  &	 $||.||_1 \leq \mathcal{O}\left(\frac{\epsilon_1}{\alpha}\right)$   
&  \hspace{-0.2cm} $||.||_1 \leq \mathcal{O}\left(r \frac{\epsilon_1}{\alpha}\right)$ \hspace{-0.2cm} & 
\hspace{-0.2cm} $||.||_2 \leq \mathcal{O} \left(  \frac{\epsilon_2}{\sigma_r^2(W)}\right)$ \hspace{-0.3cm} \\ [5pt] 
\hspace{-0.2cm} Parameters \hspace{-0.2cm}    &	 $\epsilon_1$, $\alpha$   &  $\epsilon_1$, $r$ & $r$  \\ 
\hline  
\end{tabular}
\label{comptable}
\end{center}
\end{table}

\section{Outlier Detection} \label{outliers}

It is important to point out that Algorithm~\ref{sepnmf} is very sensitive to outliers, as are most algorithms aiming to detect the vertices of the convex hull of a set of points, e.g., the algorithms from \cite{AGKM11,BRRT12} discussed in the previous section.  
Therefore, one should ideally discard the outliers beforehand, or design variants of these methods robust to outliers. In this section, we briefly describe a simple way for dealing with (a few) outliers. This idea is  inspired from the approach described in \cite{ESV1294}.  

Let us assume that the input matrix $M$ is a separable matrix $WH = W[I, H']$ satisfying Assumption~\ref{asssep} to which was added $t$ outliers gathered in the matrix $T \in \mathbb{R}^{m \times t}$ : 
\begin{align}
M & = \left[ W, \, T, \, WH' \right] \nonumber \\ 
& = [ W, \, T ] 
\left[ \begin{array}{ccc} 
I_r & 0_{r \times t} & H' \\
0_{t \times r} & I_t   &  0_{t \times r}   \end{array} \right] \label{modelo}  \\  
 & = [W, T][I_{r+t}, F'] = [W, T] F ,  \nonumber 
\end{align}
where $h'_i \in \Delta^r$ for all $i$, hence $f'_i \in \Delta^r$ for all $i$. Assuming $[W, T]$ has rank $r+t$ (hence $t \leq m-r$), the matrix $M$ above also satisfies Assumption~\ref{asssep}. 
 In the noiseless case, Algorithm~\ref{sepnmf} will then extract a set of indices corresponding to columns of $W$ and $T$ (Theorem~\ref{th1}). 
 Therefore, assuming that the matrix $H'$ has at least one non-zero element in each row, one way to identifying the outliers would be to 
\begin{itemize}

\item[(1)] Extract $r+t$ columns from the matrix $M$ using Algorithm~\ref{sepnmf}, 

\item[(2)] Compute the corresponding optimal abundance matrix $F$, and 

\item[(3)] Select the $r$ columns corresponding to rows of $F$ with the largest sum,  

\end{itemize}
see Algorithm~\ref{sepnmfo}. (Note that Algorithm~\ref{sepnmfo} requires to solve a convex quadratic program, hence it is computationally much more expensive than Algorithm~\ref{sepnmf}.) 
\algsetup{indent=2em}
\begin{algorithm}[ht!]
\caption{Recursive algorithm for separable NMF with outliers \label{sepnmfo}}
\begin{algorithmic}[1]
\REQUIRE A separable matrix $M \in \mathbb{R}^{m \times n}_+$ containing at most $t \leq m-r$ outliers, the number $r$ of columns to be extracted, and a strongly convex function $f$ satisfying Assumption~\ref{fass1}. 
\ENSURE Set of indices $J'$ such that $M(:,J') = W$ up to permutation and scaling; cf.\@ Theorem~\ref{robout}. 
\medskip 
\STATE Compute $J$, the index set of cardinality $(r+t)$ extracted by Algorithm~\ref{sepnmf}. \vspace{0.1cm}
\STATE $G = \argmin_{x_i \in \Delta^{r+t} \, \forall i} 
||M-M(:,J)X||_F^2$. \vspace{0.1cm}
\STATE Compute $J' \subseteq J$, the set of $r$ indices having the largest score $||G(j,:)||_1$ among the indices in $J$. 
\end{algorithmic} 
\end{algorithm}  
It is easy to check that Algorithm~\ref{sepnmfo} will recover the $r$ columns of $W$ because the optimal solution $G$ computed at the second step is unique and equal to $F$ (since $[W,T]$ is full rank), hence $||G(j,$:$)||_1 > 1$ for the indices corresponding to the columns of $W$ while $||G(j,$:$)||_1 = 1$ for the outliers; see Equation~\eqref{modelo}. 

In the noisy case, a stronger condition is necessary: the sum of the entries of each row of $H'$ must be larger than some bound depending on the noise level.  
In terms of hyperspectral imaging, it means that for an endmember to be distinguishable from an outlier, its abundance in the image should be sufficiently large, which is perfectly reasonable.   


\begin{theorem} \label{robout}
Let $M' = [W, T, WH'] + N \in \mathbb{R}^{m \times n}$ where $[W, WH']$ satisfies  Assumption~\ref{asssep} with $W \in \mathbb{R}^{m \times r}$, $T \in \mathbb{R}^{m \times t}$, $r\geq 2$, and let $f$ satisfy Assumption~\ref{fass1} with  strong convexity parameter $\mu$ and its gradient has Lipschitz constant $L$. Let also denote $B = [W, T]$, s = r+t, and $||n_i||_2 \leq \epsilon$ for all $i$ with  
\begin{equation*} 
\epsilon < \sigma_s(B) \, 
\min\left( \frac{1}{2\sqrt{s-1}},\frac{1}{4}\sqrt{\frac{\mu}{L}}\right) 
\left( 1 + 80   \frac{K(B)^2}{\sigma_{s}^2(B)}  \frac{L}{\mu} \right)^{-1}, 
\end{equation*} 
and $J$ be the index set of cardinality $r$ extracted by Algorithm~\ref{sepnmfo}. 
If 
\begin{equation} \label{Hass} 
2 (2n-t-r)  \frac{ \bar{\epsilon} +  \epsilon}{\sigma_s(B)} < ||H'(i,:)||_1 
\quad \text{ for all $i$}, 
\end{equation}
then there exists a permutation $P$ of $\{1,2,\dots,r\}$ such that 
\begin{align*}
\max_{1 \leq j \leq r}  ||m'_{J(j)} - w_{P(j)} ||_2 
& \leq \bar{\epsilon} 
= \epsilon \left(1+ 80 \frac{K(B)^2}{\sigma_{s}^2(B)} \frac{L}{\mu}\right) . 
\end{align*}
\end{theorem}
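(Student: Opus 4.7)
The plan is to first invoke Theorem~\ref{threc} to conclude that the $r+t$ indices extracted in step~1 of Algorithm~\ref{sepnmfo} correspond, after a permutation, to good approximations of the columns of $B=[W,T]$, and then to analyze the simplex-constrained quadratic program in step~2 so as to show that its minimizer $G$ is close to the noiseless abundance matrix $F$ from~\eqref{modelo}. Once this closeness is quantified, the row-sum criterion in step~3 separates the $r$ rows of $G$ corresponding to columns of $W$ (whose row sums are approximately $1+\|H'(i,:)\|_1$) from the $t$ rows corresponding to outlier columns (whose row sums are approximately $1$), provided the gap condition~\eqref{Hass} holds.

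First I would apply Theorem~\ref{threc} with $W$ replaced by $B$ and $H$ replaced by $F=[I_s,F']$, where $s=r+t$; the hypothesis on $\epsilon$ in Theorem~\ref{robout} is precisely the Theorem~\ref{threc} bound for this factorization. Relabeling the columns of $B$ so that the implicit permutation is the identity, the index set $J$ returned by step~1 satisfies $\|M'(:,J_k)-b_k\|_2\leq\bar\epsilon$ for $k=1,\dots,s$, or equivalently $M'(:,J)=B+E$ with $\|E(:,k)\|_2\leq\bar\epsilon$. The smallness of $\epsilon$ combined with Lemma~\ref{weyl} (or Lemma~\ref{lemsv}) then guarantees that $M'(:,J)$ has full column rank, so the minimizer $G$ of the quadratic program in step~2 is unique.

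Second, I would bound $\|g_i-f_i\|_2$ column by column. For $i=J_k\in J$, unique recovery forces $g_i=e_k$, while writing $M'(:,J_k)=Bf_i+n_i$ together with $\|M'(:,J_k)-b_k\|_2\leq\bar\epsilon$ yields $\|B(f_i-e_k)\|_2\leq\bar\epsilon+\epsilon$, and hence $|g_i(j)-f_i(j)|\leq\|f_i-e_k\|_2\leq(\bar\epsilon+\epsilon)/\sigma_s(B)$ for every $j$. For $i\notin J$, the feasibility of $f_i\in\Delta^s$ combined with the optimality of $g_i$ gives $\|M'(:,J)g_i-M'(:,i)\|_2\leq\|Ef_i-n_i\|_2\leq\bar\epsilon+\epsilon$; substituting $M'(:,J)=B+E$ and $M'(:,i)=Bf_i+n_i$ and solving for $B(g_i-f_i)$ yields $\|g_i-f_i\|_2\leq 2(\bar\epsilon+\epsilon)/\sigma_s(B)$. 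Summing the entrywise bounds over $i$ for any fixed row $j$ then gives
\[
\bigl|\|G(j,:)\|_1-\|F(j,:)\|_1\bigr| \;\leq\; \frac{(r+t)(\bar\epsilon+\epsilon)}{\sigma_s(B)} + \frac{2(n-r-t)(\bar\epsilon+\epsilon)}{\sigma_s(B)} \;\leq\; \frac{(2n-r-t)(\bar\epsilon+\epsilon)}{\sigma_s(B)},
\]
which is at most half of the quantity appearing in~\eqref{Hass}. Since $\|F(j,:)\|_1=1+\|H'(j,:)\|_1$ for the $r$ rows associated with $W$ and $\|F(j,:)\|_1=1$ for the $t$ rows associated with $T$, condition~\eqref{Hass} then ensures that every $W$-row of $G$ has strictly larger $\ell_1$-norm than every $T$-row. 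Step~3 of Algorithm~\ref{sepnmfo} therefore selects exactly the $r$ indices of $J$ corresponding to columns of $W$, and the claimed per-column bound $\|m'_{J(j)}-w_{P(j)}\|_2\leq\bar\epsilon$ is inherited directly from Theorem~\ref{threc}.

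The hard part will be the column-wise analysis of the QP in step~2: obtaining a useful separation of the row sums requires the per-column bound $\|g_i-f_i\|_2=O((\bar\epsilon+\epsilon)/\sigma_s(B))$, and this in turn requires $M'(:,J)$ to have $s$-th singular value of order $\sigma_s(B)$, not merely to be full rank. Verifying this uses Weyl-type perturbation together with the specific form of the $\epsilon$ bound inherited from Theorem~\ref{threc}; the permutation bookkeeping between the columns of $B$, the index set $J$, and the rows of $G$ and $F$ is otherwise the only tedious part of the argument.
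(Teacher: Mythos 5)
Your proposal is correct and follows essentially the same route as the paper: invoke Theorem~\ref{threc} for step~1 of Algorithm~\ref{sepnmfo}, use optimality of $G$ together with feasibility of $F$ to bound each column's residual by $\bar\epsilon+\epsilon$, deduce the entrywise closeness $|G_{ij}-F_{ij}|\le 2(\bar\epsilon+\epsilon)/\sigma_s(B)$, and conclude via the row-sum gap forced by~\eqref{Hass}. The only (harmless) difference is that you derive this closeness from a global bound $\|g_i-f_i\|_2\le 2(\bar\epsilon+\epsilon)/\sigma_s(B)$ using $\|Bx\|_2\ge\sigma_s(B)\|x\|_2$, whereas the paper bounds each entry via the distance from a column of $B$ to the span of the remaining columns; in particular, the concern in your last paragraph about needing $\sigma_s(M'(:,J))$ to be of order $\sigma_s(B)$ is unnecessary, since applying your $i\notin J$ argument to the columns $i\in J$ as well already yields the required bound without any rank or conditioning assumption on $M'(:,J)$.
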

\begin{proof} 
By Theorem~\ref{threc}, the columns extracted at the first step of Algorithm~\ref{sepnmfo} correspond to the columns of $W$ and $T$ up to error $\bar{\epsilon}$. Let then $W+N_W$ and $T+N_T$ be the columns extracted by Algorithm~\ref{sepnmf} with $K(N_W), K(N_T) \leq \bar{\epsilon}$. 

At the second step of Algorithm~\ref{sepnmfo}, the matrix $G$ is equal to 
\[
G = \argmin_{x_i \in \Delta^{r+t} \, \forall i} ||M' - [W+N_W, T+N_T] X||_F^2, 
\]
up to the permutation of its rows. 
It remains to show that 
\begin{equation} \label{lub}
\min_{1 \leq i \leq r} ||G(i,:)||_1 \;  > \;  \max_{r+1 \leq i \leq r+t} ||G(i,:)||_1, 
\end{equation}
so that the last step of Algorithm~\ref{sepnmfo} will identify correctly the columns of $W$ among the ones extracted at the first step. We are going to show that 
\[
G \approx F = \left[ \begin{array}{ccc} 
I_r & 0_{r \times t} & H' \\
0_{t \times r} & I_t   &  0_{t \times r}   \end{array} \right]. 
\]
More precisely, we are going to prove the following lower (resp.\@ upper) bounds for the entries of the first $r$ (resp.\@ last $t$) rows of $G$:  
 \begin{itemize}
 \item[(a)] For $1 \leq i \leq r$, 
 $G_{ij} \geq \max\left(0, F_{ij} - 2 \frac{ \bar{\epsilon} +  \epsilon}{\sigma_s(B)}\right)$ for all $j$.  
 
 \item[(b)] For $r+1 \leq i \leq r+t$, $G_{ij} \leq \min\left(1, F_{ij} + 2 \frac{ \bar{\epsilon} +  \epsilon}{\sigma_s(B)}\right)$ for all $j$. 
 \end{itemize}
 Therefore, assuming (a) and (b) hold, we obtain 
 \begin{align*}
 \min_{1 \leq i \leq r} ||G(i,:)||_1  
 & \hspace{0.2cm} \underset{(a)}{\geq} \hspace{0.2cm} 
 \left(1-2  \frac{ \bar{\epsilon} +  \epsilon}{\sigma_s(B)}\right) + 
 ||H'(i,:)||_1 \\
 & \hspace{0.7cm} - 2 (n-t-r)  \frac{ \bar{\epsilon} +  \epsilon}{\sigma_s(B)} \\ 
 &  \underset{Eq.\eqref{Hass}}{>}   1 + 2 (n-1) \frac{ \bar{\epsilon} +  \epsilon}{\sigma_s(B)} \\ 
 & \hspace{0.2cm} \underset{(b)}{\geq} \hspace{0.2cm} \max_{r+1 \leq i \leq r+t} ||G(i,:)||_1, 
 \end{align*} 
 which proves the result. 
 It remains to prove (a) and (b). 
 First, we have that 
 \begin{equation} \label{lowbnd}
 ||m'_j - [W+N_W, T+N_T] G_{:j}||_2 \leq \epsilon + \bar{\epsilon} \quad \text{ for all $j$}. 
 \end{equation}   
 In fact, for all $j$, 
 \begin{align*}
 ||m'_j - & [W+N_W, T+N_T] G_{:j}||_2  \\
  & \leq ||m_j + n_j - [W, T] F_{:j} - [N_W, N_T]F_{:j}||_2 \\
 & \leq ||m_j - [W, T] F_{:j}||_2 + \epsilon + \bar{\epsilon} = \epsilon + \bar{\epsilon}, 
 \end{align*}
 since $G(:,j)$ leads to the best approximation of $m'_j$ over $\Delta$ (see step 2 of Algorithm~\ref{sepnmfo}) and $f_j \in \Delta$.  
 
Then, let us prove the upper bound for the block of matrix $G$ at position $(2,1)$, that is, let us prove that  
\[
G_{ij} \leq 2 \frac{ \bar{\epsilon} +  \epsilon}{\sigma_s(B)}  
\quad \text{ for all } r+1  \leq i \leq r+t \text{ and } 1 \leq j \leq r. 
\]
(Note that $0 \leq G \leq 1$ by construction, hence some of the bounds are trivial, e.g., for the block (1,2).) The derivations necessary to obtain the bounds for the other (non-trivial) blocks are exactly the same and are then omitted here. 
Let $r+1  \leq i \leq t$ and $1 \leq j \leq r$ and denote $G_{ij} = \delta$, and let also  $I = \{1,2,\dots,r+t\} \backslash \{i\}$. We have  
\begin{align}
||m'_j - & [W+N_W, T+N_T] G_{:j}||_2 \nonumber \\ 
& = || (w_j + n_j) + B G_{:j} + [N_W, N_T]  G_{:j}||_2 \nonumber \\ 
& \geq || w_j + B(:,I)  G(I,j) + B(:,i) \delta ||_2 -  \epsilon  - \bar{\epsilon}  \nonumber \\ 
& \geq \min_{y \in \mathbb{R}^{r+t-1}} \delta || B(:,I) y - B(:,i)  ||_2 -  \epsilon - \bar{\epsilon} \nonumber \\
& \geq \delta \sigma_s(B)  -  \epsilon  - \bar{\epsilon} . \label{dels} 
\end{align} 
The first inequality follows from $K(N) \leq \epsilon$, $K([N_W, N_T]) \leq \bar{\epsilon}$ and $G_{:j} \in \Delta^{r+t}$, while the second inequality follows from the fact that $w_j$ is a column of $B(:,I)$. The last inequality follows from the fact that the projection of any column of $B$ onto the subspace spanned by the other columns is at least $\sigma_s(B)$. 
 Finally, using Equations~\eqref{lowbnd} and~\eqref{dels}, we have $\bar{\epsilon} +  \epsilon \geq \delta \sigma_s(B)  - \bar{\epsilon} -  \epsilon$, hence $G_{ij} = \delta \leq 2 \frac{ \bar{\epsilon} +  \epsilon}{\sigma_s(B)}$. 
\end{proof}

\section{Choices for $f$ and Related Methods}  \label{choicef}

In this section, we discuss several choices for the function $f$ in Algorithm~\ref{sepnmf}, and relate them to existing methods.

\subsection{Best Choice with $\mu = L$: $f(x) = ||x||_2^2$} \label{l2norm}

According to our derivations (see Theorem~\ref{threc}), using functions $f$  whose strong convexity parameter $\mu$ is equal to the Lipschitz constant $L$ of its gradient is the best possible choice (since it minimizes the error bounds). 
The only function satisfying Assumption~\ref{fass1} along with this condition is, up to a scaling factor, 
 $f(x)  =  ||x||_2^2  = \sum_{i=1}^m x_i^2$.  
In fact, Assumption~\ref{fass1} implies  $\frac{\mu}{2} ||x||_2^2 \leq f(x) \leq \frac{L}{2} ||x||_2^2$; see Equation~\eqref{normconv}. 
However, depending on the problem at hand, other choices could be more judicious (see Sections~\ref{otherchoice} and \ref{pnorms}). 
It is worth noting that Algorithm~\ref{sepnmf} with $f(x) = ||x||_2^2$ has been introduced and analyzed by several other authors: 

 \begin{itemize}
 
 \item \textbf{Choice of the Reflections for QR Factorizations.} 
 Golub and Businger \cite{BG65} construct QR factorizations of matrices by performing, at each step of the algorithm, the Householder reflection with respect to the column of $M$ whose projection onto the orthogonal complement of the previously extracted columns has maximum $\ell_2$-norm.

 \item \textbf{Successive Projection Algorithm.}  
  Ara\'ujo et al.\@ \cite{MC01} proposed the successive projection algorithm (SPA), which is equivalent to  Algorithm~\ref{sepnmf} with $f(x) = ||x||_2^2$. They used it for variable selection in spectroscopic multicomponent analysis, and showed it works better than other standard techniques. In particular, they mention `SPA seems to be more robust than genetic algorithms' but were not able to provide a rigorous justification for that fact (which our analysis does). 
 Ren and Chang \cite{RC03} rediscovered the same algorithm, which was referred to as the automatic target generation process (ATGP). It  was empirically observed in \cite{CW07} to perform better than other hyperspectral unmixing techniques (namely, PPI~\cite{B94} and VCA~\cite{ND05}). However, no rigorous explanation of their observations was provided. In Section~\ref{ne}, we describe these techniques and explain why they are not robust to noise, which theoretically justifies the better performances of Algorithm~\ref{sepnmf}. 
Chan et al.\@ \cite{CM11}  analyzed the same algorithm (with the difference that the data is preprocessed using a linear dimensionality reduction technique). 
 The algorithm is referred to as the successive volume maximization algorithm (SVMAX).  
They also successfully use Algorithm~\ref{sepnmf} as an initialization for a more sophisticated approach which does not take into account the pure-pixel assumption.

\item \textbf{Greedy Heuristic for Volume Maximization.}   
\c{C}ivril and Magdon-Ismail~\cite{AM09, AM10} showed that Algorithm~\ref{sepnmf} with $f(x) = ||x||_2^2$ is a very good greedy heuristic for the following problem: given a matrix $M$ and an integer $r$, find a subset of $r$ columns of $M$ whose convex hull has maximum volume. More precisely, unless $\mathcal{P} = \mathcal{NP}$, they proved that the approximation ratio guaranteed by the greedy heuristic is within a logarithmic factor of the best possible achievable ratio by any  polynomial-time algorithm. 
However, the special case of separable matrices was not considered.  
This is another advantage of Algorithm~\ref{sepnmf}: even if the input data matrix $M$ is not approximately separable, it identifies $r$ columns of $M$ whose convex hull has large volume. For the robust algorithms from \cite{AGKM11, BRRT12} discussed in Section~\ref{compa}, it is not clear whether they will be able to produce a meaningful output in that case; see also Section~\ref{compaB} for some numerical experiments.

\end{itemize}

\subsection{Generalization: $f(x) = \sum_{i=1}^m h(x_i)$} \label{otherchoice}

Given a one-dimensional function $h : \mathbb{R} \to \mathbb{R}_+$ satisfying Assumption~\ref{fass1}, it is easy to see that the separable function $f(x) = \sum_{i=1}^m h(x_i)$ also satisfies Assumption~\ref{fass1} (notice that $h(y) = y^2$ gives $f(x) = ||x||_2^2$.). For example, we could take 
\[
h(y) = \frac{y^2}{\alpha + |y|} , \quad \alpha > 0, 
\; \text{ hence } f(x) = \sum_{i=1}^m \frac{x_i^2}{\alpha + |x_i|}. 
\] 
This choice limits the impact of large entries in $x$, hence would potentially be more robust to outliers. In particular, as $\alpha$ goes to zero, $f(x)$  converges to $||x||_1$ while, when $\alpha$ goes to infinity, it converges to $\frac{||x||_2^2}{\alpha}$ (in any bounded set). 

\begin{lemma}
In the ball $\{ y \in \mathbb{R} \ | \ |y| \leq K\}$, the function 
\[
h(y) = \frac{y^2}{\alpha + |y|} , \quad \alpha > 0, 
\]
is strongly convex with parameter $\mu = \frac{2 \alpha^2}{(\alpha+K)^3}$ and its gradient is Lipschitz continuous with constant $L = \frac{2}{\alpha}$. 
\end{lemma}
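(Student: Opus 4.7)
My plan is to reduce both claims to a clean expression for $h''(y)$ on the ball and then invoke the standard second-derivative characterizations of strong convexity and Lipschitz-continuous gradient.

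First I would verify that $h$ is of class $C^2$ on $\mathbb{R}$ despite the appearance of $|y|$. Writing $h(y) = y^2/(\alpha+y)$ for $y\ge 0$ and using the evenness of $h$ for $y\le 0$, a direct differentiation gives
\[
h'(y) = \frac{y(2\alpha+|y|)}{(\alpha+|y|)^2}\,\mathrm{sgn}(y) \cdot \mathrm{sgn}(y) = \frac{y(2\alpha+|y|)\,\mathrm{sgn}(y)^2}{(\alpha+|y|)^2},
\]
which collapses to the same smooth formula on both sides and yields $h'(0)=0$. A second differentiation, done separately for $y>0$ and $y<0$ and matched at the origin, gives the key identity
\[
h''(y) \;=\; \frac{2\alpha^2}{(\alpha+|y|)^3} \qquad \text{for all } y \in \mathbb{R}.
\]
This identity is the heart of the proof and is obtained by an elementary quotient-rule computation; the miraculous cancellation that makes $h''$ continuous across $0$ (and that $h''(0)=2/\alpha$ matches the one-sided limits) is what I expect to verify carefully but not laboriously.

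Once \eqref{normconv}'s one-dimensional analogue is available, both bounds reduce to extremizing the explicit function $\varphi(t)=2\alpha^2/(\alpha+t)^3$ for $t=|y|\in[0,K]$. For strong convexity, I would use the fact that a $C^2$ function $h$ with $h''(y)\ge \mu$ on a convex set is strongly convex there with parameter $\mu$; since $\varphi$ is decreasing in $t$, its minimum on $[0,K]$ is attained at $t=K$, giving
\[
h''(y) \;\ge\; \frac{2\alpha^2}{(\alpha+K)^3} \;=\; \mu \qquad \text{for all } |y|\le K,
\]
as required. For the Lipschitz estimate on $h'$, I would apply the mean-value theorem: $|h'(y_1)-h'(y_2)|\le \bigl(\sup_y h''(y)\bigr)\,|y_1-y_2|$. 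Since $\varphi$ is maximized at $t=0$, its supremum (already on all of $\mathbb{R}$, and a fortiori on the ball) is $\varphi(0)=2/\alpha = L$, proving that $h'$ is Lipschitz with constant $L=2/\alpha$.

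The only real subtlety is the smoothness at $y=0$; once the closed form $h''(y)=2\alpha^2/(\alpha+|y|)^3$ is established the rest is an application of two standard one-variable convex-analysis facts. I therefore expect no genuine obstacle beyond carrying out and presenting the quotient-rule calculation cleanly.
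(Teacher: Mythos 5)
Your proposal is correct and follows essentially the same route as the paper: the paper's proof simply states the identity $h''(y)=\frac{2\alpha^2}{(\alpha+|y|)^3}$ and reads off $\mu$ and $L$ as its minimum and maximum over $|y|\le K$ by monotonicity. Your added care about $C^2$-smoothness at $y=0$ is a welcome (and correct) elaboration of a step the paper leaves implicit, but it does not change the argument.
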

\begin{proof}
On can check that 
\[
\frac{2 \alpha^2}{(\alpha+K)^3} 
\leq 
h''(y) = \frac{2 \alpha^2}{(\alpha+|y|)^3} \leq \frac{2}{\alpha}, 
\quad \text{ for any $|y| \leq K$ }, 
\]
 hence $\mu = \frac{2 \alpha^2}{(\alpha+K)^3}$, and $L = \frac{2}{\alpha}$. 
\end{proof}

For example, one can choose $\alpha = K$ for which we have $\frac{L}{\mu} = 2$, which is slightly larger than one but is less sensitive to large, potentially outlying,  entries of $M$. Let us illustrate this on a simple example: 
\begin{align} 
M' 
& = 
\left( 
\begin{array}{cc} 
2 &   2\\ 
0  &   1\\ 
2   &  2\\ 
1    & 2\\ 
0    & 1 
\end{array} \right) 
\left( 
\begin{array}{ccc} 
1 & 0 & 0.5 \\ 
0 & 1 & 0.5  
\end{array} \right) 
+
\left( 
\begin{array}{ccc} 
   0  &  0   &  \epsilon \\
   0  &   0   &   0     \\
   0  &   0   &  0  \\
   0  &    0   &  0     \\ 
   0  &   0   & 0  \end{array} \right). \label{robex} 
\end{align} 
One can check that, for any $\epsilon \leq 0.69$, Algorithm~\ref{sepnmf} with $f(x) = ||x||_2^2$ recovers the first two columns of $M$, that is, the columns of $W$. However, using $f(x) = \sum_i \frac{x_i^2}{1 + |x_i|}$, Algorithm~\ref{sepnmf} recovers the columns of $W$ for any $\epsilon \leq 1.15$. 
Choosing appropriate function $f(x)$ depending on the input data matrix and the noise model is a topic for further research.

\begin{remark}[Relaxing Assumption~\ref{fass1}] \label{remRelf}
The condition that the gradient of $f$ must be Lipschitz continuous in Assumption~\ref{fass1} can be relaxed to the condition that the gradient of $f$ is continuously differentiable. In fact, in all our derivations, we have always assumed that $f$ was applied on a bounded set (more precisely, the ball $\{ x \ | \ ||x||_2 \leq K(W) \}$). Since $g \in \mathcal{C}^1$ implies that $g$ is locally Lipschitz continuous, the condition $f \in \mathcal{C}^2$ is sufficient for our analysis to hold.  Similarly, the strong convexity condition can be relaxed to local strong convexity. 

It would be interesting to investigate more general classes of functions for which our derivations hold. For example, for any increasing function $g : \mathbb{R}_+ \to \mathbb{R}$, the output of Algorithm~\ref{sepnmf} using $f$ or using $(g \circ f)$ will be the same, since $f(x) \geq f(y) \iff g\left(f(x)\right) \geq g\left(f(y)\right)$. Therefore, for our analysis to hold, it suffices that $(g \circ f)$ satisfies Assumption~\ref{fass1} for some increasing function $g$. For example, $||x||_2^4$ is not strongly convex although it will output the same result as $||x||_2^2$ hence will satisfy the same error bounds.  
\end{remark}

\subsection{Using $\ell_p$-norms} \label{pnorms}

Another possible choice is 
\[
f(x) \; =  \; ||x||_p^2  \; =  \; \left( \sum_{i=1}^m |x_i|^p \right)^{2/p}. 
\]
For $1 < p \leq 2$, $f(x)$ is strongly convex with parameter $2(p-1)$ with respect to the norm $||.||_p$ \cite[Section 4.1.1]{JN08}, while its gradient is locally Lipschitz continuous (see Remark~\ref{remRelf}). For $2 \leq p < +\infty$, the gradient of $f(x)$ is Lipschitz continuous with respect to the norm $||.||_p$ with constant $2(p-1)$ (by duality), while it is locally strongly convex. Therefore, $f$ satisfies Assumption~\ref{fass1} for any $1 < p < +\infty$ in any bounded set, hence our analysis applies. 
Note that, for $p=1$ and $p = +\infty$, the algorithm is not guaranteed to work, even in the noiseless case (when points are on the boundary of the convex hull of the columns of $W$): consider for example the following separable matrices 
\[
M = \left( \begin{array}{ccc} 
1 & 0 & 0.5 \\ 
0 & 1 & 0.5  
\end{array} \right) 
= 
\left( \begin{array}{cc} 
1 & 0  \\ 
0 & 1  
\end{array} \right) 
\left( \begin{array}{ccc} 
1 & 0 & 0.5 \\ 
0 & 1 & 0.5  
\end{array} \right), 
\] 
for which the $\ell_1$-norm may fail (selecting the last column of $M$),  and 
\[
M = \left( \begin{array}{ccc} 
1 & 1 & 1 \\ 
0 & 1 & 0.5  
\end{array} \right) 
= 
\left( \begin{array}{cc} 
1 & 1  \\ 
0 & 1  
\end{array} \right) 
\left( \begin{array}{ccc} 
1 & 0 & 0.5 \\ 
0 & 1 & 0.5  
\end{array} \right), 
\] 
for which the $\ell_\infty$-norm may fail. Similarly as in the previous section, using $\ell_p$-norms with $p \neq 2$ might be rewarding in some cases. For example, for $1 < p < 2$, the $\ell_p$-norm is less sensitive to large entries of $M$. 
Consider the matrix from Equation~\eqref{robex}: for $p = 1.5$, Algorithm~\ref{sepnmf} extracts the columns of $W$ for any $\epsilon \leq 0.96$, while for $p = 4$, it only works for $\epsilon \leq 0.31$ (recall for $p = 2$ we had $\epsilon \leq 0.69$).

Algorithm~\ref{sepnmf} with $f(x) = ||x||_p$ has been previously introduced as the $\ell_p$-norm based pure pixel algorithm (TRI-P)~\cite{A11}, and shown to perform, in average, better than other existing techniques (namely, N-FINDR \cite{Win99}, VCA \cite{ND05}, and SGA \cite{CW06}). 
The authors actually only performed numerical experiments for $p=2$, but did not justify this choice (the reason possibly is that it gave the best numerical results, as our analysis suggests), 
and could not explain why Algorithm~\ref{sepnmf} performs better than other approaches in the noisy case.

\section{Numerical Experiments} \label{ne}

In the first part of this section, we compare Algorithm~\ref{sepnmf} with several fast hyperspectral unmixing algorithms under the linear mixing model and the pure-pixel assumption. 
We first briefly describe them (computational cost and main properties) and then perform a series of experiments on synthetic data sets in order to highlight their properties. 
For comparisons of Algorithm~\ref{sepnmf} with other algorithms on other synthetic and real-world hyperspectral data sets, we refer the reader to \cite{MC01, RC03, CW07, ZR08, CM11, A11} since Algorithm~\ref{sepnmf} is a generalization of the algorithms proposed in \cite{MC01, RC03, CM11, A11}; see Section~\ref{choicef}.

In the second part of the section, we compare Algorithm~\ref{sepnmf} with the Algorithm of Bittorf et al.\@ \cite{BRRT12}.

\subsection{Comparison with Fast Hyperspectral Unmixing Algorithms} \label{ta}


We compare the following algorithms: 

\begin{enumerate}
\item \textbf{Algorithm~\ref{sepnmf} with $f(x) = ||x||_2^2$}. 
We will only test this variant because, according to our analysis, it is the most robust. (Comparing different variants of Algorithm~\ref{sepnmf} is a topic for further research.)  
The computational cost is rather low: steps 3 and 5 are the only steps requiring computation, and have to be performed $r$ times. We have 
\begin{itemize}
\item Step 3. Compute the squared norm of the columns of $R$, which requires $n$ times $2m$ operations (squaring and summing the elements of each column), and extract the maximum, which requires $n$ comparisons, for a total of approximately $2mn$ operations. 
\item Step 5. It can be compute in the following way 
\[
R  \leftarrow \left(I-\frac{u_j u_j^T}{||u_j||_2^2}\right)R   
 = R - \frac{u_j}{||u_j||_2^2}  (u_j^TR), 
\]
where computing $x^T = u_j^TR$ requires $2mn$ operations, $y = \frac{u_j}{||u_j||_2^2}$ $m$ operations, and $R - yx^T$ $2mn$ operations, for a total of approximately $4mn$ operations. 
\end{itemize}
The total computational cost of Algorithm~\ref{sepnmf} is then about $6mnr$ operations, plus some negligible terms.  

\begin{remark}[Sparse Matrices]
If the matrix $M$ is sparse, $R$ will eventually become dense which is often impractical. Therefore, $R$ should be kept in memory as the original matrix $M$ minus the rank-one updates. 
\end{remark}

\begin{remark}[Reducing Running Time]
Using recursively the formula 
\[
||(I-uu^T)v||_2^2 = ||v||_2^2 - (u^Tv)^2 , 
\]
for any  $u,v \in \mathbb{R}^m \text{ with } ||u||_2 = 1$, 
we can reduce the computational cost to {$2mnr + \mathcal{O}(mr^2)$ operations}. Although this formula is unstable when $u$ is almost parallel to $v$, this is negligible as we are only interested in the column with the largest norm (on all the tested data sets, including the 40000 randomly generated matrices below, we have always obtained the same results with both implementations). This is the version we have used for our numerical experiments as it turns out to be much faster (for example, on 200-by-200 matrices with $r=20$, it is about seven times faster, and on a real-world $188$-by-$47750$ hyperspectral image with $r=15$, about 20 times faster --taking less than half a second), and handles sparse matrices as it does not compute the residual matrix explicitly (for example, it takes about half a second for the  19949-by-43586 20-newsgroup data set for $r=20$). Note that all experiments have been performed with MATLAB R2011b on a laptop with 2GHz Intel Core i7-2630QM. 
The code is available at \url{https://sites.google.com/site/nicolasgillis/code}. 
\end{remark}

\item \textbf{Pure Pixel Index (PPI)} \cite{B94}.  PPI uses the fact that the maxima (and minima) of randomly generated linear functions over a polytope are attained on its vertices with probability one. Hence, PPI randomly generates a large number of linear functions (that is, functions $f(x) = c^T x$ where $c \in \mathbb{R}^{m}$ is uniformly distribution over the sphere), and identifies the columns of $M$ maximizing and minimizing these functions. 
Under the separability assumption, these columns must be, with probability one, vertices of the convex hull of the columns of $M$. Then, a score is attributed to each column of $M$: it is equal to the number of times the corresponding column is identified as a minimizer or maximizer of one of the randomly generated linear functions. Finally, the $r$ columns of $M$ with the largest score are identified as the columns of~$W$.  
Letting $K$ be the number of generated linear functions, we have to evaluate $K$ times linear functions overs $n$ vertices in dimension $m$ for a total computational cost of $\mathcal{O}(Kmn)$. For our experiments, we will use $K = 1000$. 
There are several pitfalls in using PPI: 
\begin{enumerate} 
\item It is not robust to noise. In fact, linear functions can be maximized at any vertex of the convex hull of a set of points. Therefore, in the noisy case, as soon as a column of the perturbed matrix $M'$ is not contained in the convex hull of the columns of $W$, it can be identified as a vertex. This can occur for arbitrarily small perturbation, as will be confirmed by the experiments below.

\item If not enough linear functions are generated, the algorithm might not be able to identify all the vertices (even in the noiseless case). This is particularly critical in case of ill-conditioning  because the probability that some vertices maximize a randomly generated linear function can be arbitrarily low. 

\item If the input noisy data matrix contains many columns close to a given column of matrix $W$, the score of these columns will be typically small (they essentially share the score of the original column of matrix $W$), while an isolated column which does not correspond to a column of $W$ could potentially have a higher score than these columns, hence be extracted. This can for example be rather critical for hyperspectral images where there typically are many pixels close to pure pixels 
(i.e., columns of $M$ corresponding to the same column of $W$). 
Moreover, for the same reasons, PPI might extract columns of $M$ corresponding to the same column of $W$.  
\end{enumerate}

\item \textbf{Vertex Component Analysis (VCA)} \cite{ND05}. The first step of VCA is to preprocess the data using principal component analysis which requires $O(nm^2 + m^3)$ \cite{ND05}. Then, the core of the algorithm requires $O(r m^2)$ operations (see below), for a total of $O(nm^2 + m^3)$ operations\footnote{The code is available at \url{http://www.lx.it.pt/~bioucas/code.htm}.}.  
Notice that the preprocessing is particularly well suited for data sets where $m \ll n$, such as hyperspectral images, where $m$ is the number of hyperspectral images with $m \sim 100$,  while $n$ is the number of pixels per image with $n \sim 10^6$.  
The core of VCA is very similar to Algorithm~\ref{sepnmf}:  
at each step, it projects the data onto the orthogonal complement of the extracted column. However, instead of using a strictly convex function to identify a vertex of the convex hull of $M$ (as in Algorithm~\ref{sepnmf}), it uses a randomly generated linear function  (that is, it selects the column maximizing the function $f(x) = c^T x$ where $c$ is randomly generated, as PPI does). Therefore, for the same reasons as for PPI, the algorithm is not robust. However, it solves the second and third pitfalls of PPI (see point (b) and (c) above), that is, it will always be able to identify enough vertices, and a cluster of points around a vertex are more likely to be extracted than an isolated point. 
Note that, in the VCA implementation, only one linear function is generated at each step which makes it rather sensitive to this choice.  
Finally, the two main differences between VCA and Algorithm~\ref{sepnmf} are  that: (1) VCA uses a pre-processing (although we could implement a version of Algorithm~\ref{sepnmf} with the same pre-processing), and (2) VCA uses randomly generated linear functions to pick vertices of the convex hull of the columns of $M$ (which makes it non-robust to noise, and also non-deterministic).

\item \textbf{Simplex Volume Maximization (SiVM)} \cite{T12}. SiVM recursively extracts columns of the matrix $M$ while trying to maximize the volume of the convex hull of the corresponding columns. 
Because evaluating the volumes induced by adding a column not yet extracted to the previously selected ones is computationally expensive, the function is approximated via a heuristic, for a total computational cost of $O(mnr)$ operations (as for Algorithm~\ref{sepnmf}). The heuristic assumes that the columns of~$W$ are located at the same distance, that is, $||w_i-w_j||_2 = ||w_k-w_l||_2$ for all $i \neq j$ and $k \neq l$. Therefore, there is not guarantee that the algorithm will work, even in the noiseless case; this will be particularly critical for ill-conditioned problems, which will be confirmed by the experiments below.

\end{enumerate}

We now generate several synthetic data sets allowing to highlight the properties of the different algorithms, and in particular of Algorithm~\ref{sepnmf}. 
We are going to consider noisy separable matrices generated as follows.  
\begin{enumerate}

\item The matrix $W$ will be generated in two different ways : 
	\begin{enumerate}
	
		\item[(a)] \emph{Uniform Distribution.} 
		The entries of matrix $W \in \mathbb{R}^{200 \times 20}$ are randomly and independently generated following an uniform distribution between 0 and 1 (using the \emph{rand()} function of MATLAB). 
		
		\item[(b)] \emph{Ill-conditioned.} 
		First, we generate a matrix $W' \in \mathbb{R}^{200 \times 20}$ as above (that is, using the \emph{rand()} function of MATLAB). Then, we compute the compact SVD $(U \in \mathbb{R}^{200 \times 20},  \Sigma \in \mathbb{R}^{20 \times 20}, V \in \mathbb{R}^{20 \times 20})$  of $W' = U\Sigma V^T$, and finally generate $W = U S V^T$ where $S$ is a diagonal matrix whose diagonal entries are equal to $\alpha^{i-1}$ for  $i=1,2,\dots,20$ where $\alpha^{19} = 10^{-3}$ (that is, $\alpha = 0.695$) in such a way that $\sigma_1(W) = 1$ and $\sigma_{20}(W) = 10^{-3}$, hence $\kappa(W) = 1000$.  (Note that $W$ might not be nonnegative, a situation which is handled by the different algorithms.)  
	\end{enumerate}		
	
\item The matrices $H$ and $N$ will be generated in two different ways as well : 

	\begin{enumerate}
	
		\item[(c)] \emph{Middle Points.} 
	We set $H = [I_{20}, \, H'] \in \mathbb{R}^{20 \times 210}$, where the columns of $H'$ contain all possible combinations of two non-zero entries equal to 0.5 at different positions (hence $H'$ has $\binom{20}{2} = 190$ columns). This means that the first 20 columns of $M$ are equal to the columns of $W$ while the 190 remaining ones are equal to the middle points of the columns of $W$.  We do not perturb the first 20 columns of $M$ (that is, $n_i = 0$ for $1 \leq i \leq 20$), while, for the 190 remaining ones, we use 
\[
n_i = \delta \, (m_i-\bar{w}) \; \text{ for } 21 \leq i \leq 210, \quad \delta \geq 0, 
\]
where $\bar{w}$ is the average of the columns of $W$ (geometrically, this is the vertex centroid of the convex hull of the columns of $W$). This means that we move the columns of $M$ toward the outside of the convex hull of the columns of $W$. Hence, for any $\delta > 0$, the columns of $M'$ are not contained in the convex hull of the columns of $W$ (although the rank of $M'$ remains equal to 20). 

		\item[(d)] \emph{Dirichlet and Gaussian Distributions.} We set $H = [I_{20}, \, I_{20}, \, H'] \in \mathbb{R}^{20 \times 240}$, where the columns of $H'$ are generated following a Dirichlet distribution whose $r$ parameters are chosen uniformly in $[0,1]$ (the Dirichlet distribution generates vectors $h'_i$ on the boundary of $\Delta^r$, that is, $\sum_k h'_i(k) = 1$ $\forall i$). 
				We perturb each entry of $M$ independently using the normal distribution: 
\[
n_i(k) \sim \delta \, \mathcal{N}(0,1) \,\text{ for } 1 \leq i \leq 240, 1 \leq k \leq 200. 
\] 
(The expected value of the $\ell_2$-norm of the columns of $N$ is $\sqrt{m} \delta$, the square root of the expected value of a Chi-squared distribution.) Notice that each column of $W$ is present twice as a column of $M$ (in terms of hyperspectral unmixing, this means that there are two pure pixels per endmember). 

	\end{enumerate}	
	\end{enumerate}
	
Finally, we construct the noisy separable matrix $M' = WH + N$ in four different ways, see Table~\ref{experi}, where $W$, $H$ and $N$ are generated as described above for a total of four experiments. 
	\begin{table}[ht!]
\begin{center}
\caption{Generation of the noisy separable matrices for the different experiments and average value of $\kappa(W)$, $K(W)$, and $\sigma_r(W)$.} 
\begin{tabular}{|c|c|c|c|c|}
\hline
					& Exp. 1   &   Exp. 2 &  Exp. 3 & Exp. 4  \\ 
\hline 
$W$		&	 (a)  & (a)  	 & 	(b)	&  (b) \\ 
$N$ and $H$  			 &	 (c)   &  (d)  & 	(c) 	&  (d) \\ \hline 
$\kappa(W)$  			 &	10.84  & 10.84 & 	1000 	  &  1000 \\
$K(W)$  			     &	8.64   &  8.64 & 	0.41 	&  0.41 \\
$\sigma_r(W)$  	   &	2.95   &  2.95 & $10^{-3}$	& $10^{-3}$  \\
Average ($\max_i ||n_i||_2$ $\delta^{-1}$) &	3.05  &  16.15  & 0.29 	&  16.15     \\ 
\hline  
\end{tabular}
\label{experi}
\end{center}
\end{table}
For each experiment, 
we generate 100 matrices for 100 different values of $\delta$ and compute the percentage of columns of $W$ that the algorithms were able to identify (hence the higher the curve, the better); see Figure~\ref{expdel}. (Note that we then have 10000 matrices generated for each experiment.)  
\begin{figure}[ht!]
\begin{center}
\includegraphics[width=8cm]{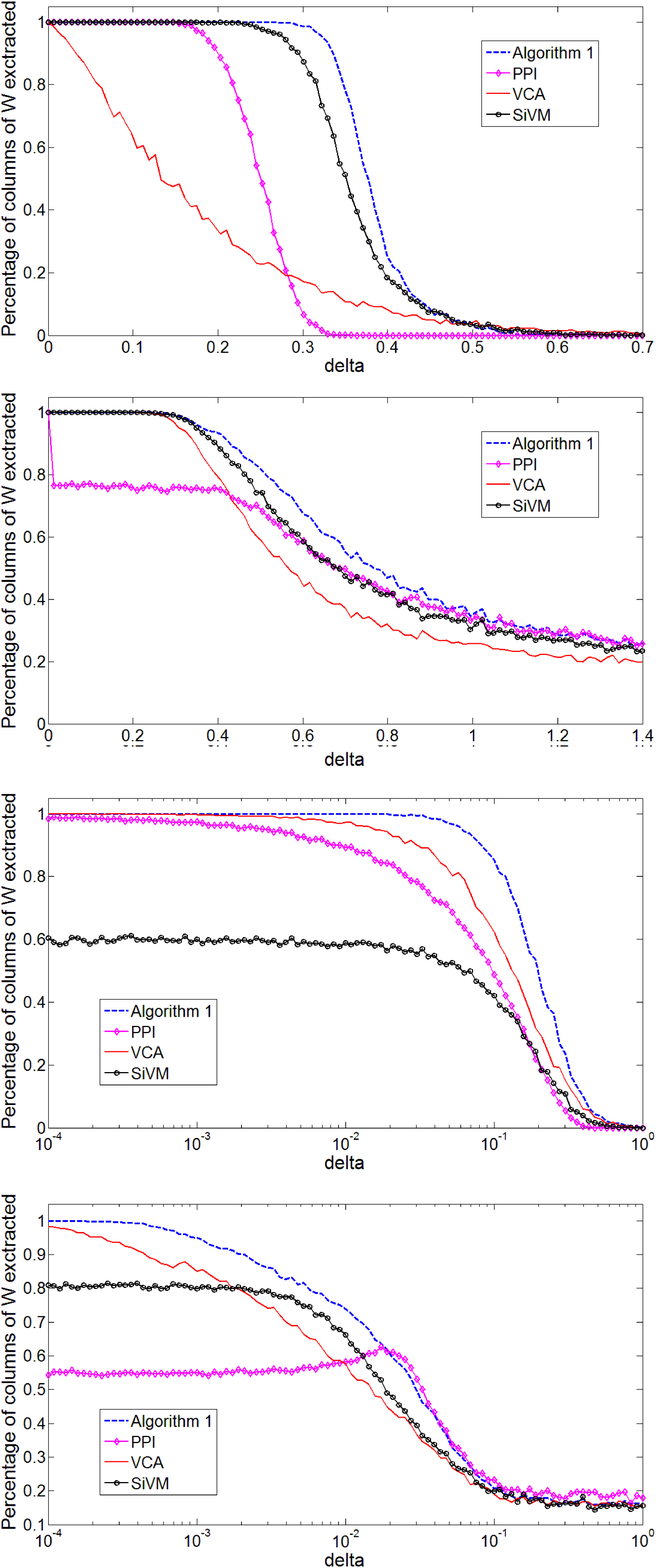}
\caption{Comparison of Algorithm~\ref{sepnmf}, PPI, VCA and SiVM. From top to bottom: Exp.\@~1, Exp.\@~2, Exp.\@~3, and Exp.\@~4.} 
\label{expdel}
\end{center}
\end{figure} 
We observe the following 
\begin{itemize}
\item Algorithm~\ref{sepnmf} is the most robust algorithm as it is able to identify all the columns of $W$ for the largest values of the perturbation $\delta$ for all experiments; see Table~\ref{robustn} and Figure~\ref{expdel}. 
\begin{table}[ht!]
\begin{center}
\caption{Robustness: maximum values of $\delta$ for perfect recovery.} 
\begin{tabular}{|c||c|c|c|c|}
\hline
					& Exp. 1   &   Exp. 2 &  Exp. 3 & Exp. 4  \\ 
\hline \hline
Algorithm 1	 &	0.252   &  0.238  	 & 	0.011	& $1.74^{\ast} 10^{-4}$ \\ 
PPI        &	 0.175   & 0 & 	0 	&  0 \\
VCA       &	  0 & 0.210 &  0	 	&  0 \\
SiVM      &	0.126  & 0.224 & 	/ 	&   / \\ 
\hline  
\end{tabular}
\label{robustn}

(The sign / means that the algorithm failed even in the noiseless case.) 
\end{center}
\end{table}

\item  In Exp.\@~1, PPI and SiVM perform relatively well, the reason being that the matrix $W$ is well-conditioned (see Table~\ref{experi}) while VCA is not robust to any noise. In fact, as explained in Section~\ref{ta}, VCA only uses one randomly generated linear function to identify a column of $W$ at each step, hence can potentially extract any column of $M$ since they all are vertices of the convex hull of the columns of $M$ (the last columns of $M$ are the middle points of the columns of $W$ and are perturbed toward the outside of the convex hull of the columns of~$W$).   

\item  In Exp.\@~2, the matrix $W$ is well-conditioned so that SiVM still performs well.  PPI is now unable to identify all columns of $M$, because of the repetition in the data set (each column of $W$ is present twice as a column of $M$)\footnote{For $\delta = 0$, we observe that our implementation of the PPI algorithm actually recovers all columns of $W$. The reason is that the first columns of $M$ are exactly equal to each other and that the MATLAB  \emph{max(.)} function only outputs the smallest index corresponding to a maximum value. This is why PPI works in the noiseless case even when there are duplicates.}. VCA now performs much better because the columns of $M$ are strictly contained in the interior of the convex hull of the columns of~$W$.

\item  In Exp.\@~3 and 4, SiVM performs very poorly because of the ill-conditioning of matrix $W$. 

\item In Exp.\@~3, as opposed to Exp.\@ 1, PPI is no longer robust because of ill-conditioning, although more than 97\% of the columns of $W$ are perfectly extracted for all $\delta \leq 10^{-3}$. VCA is not robust but performs better than PPI, and extracts more than 97\% of the columns of $W$ for all $\delta \leq 10^{-2}$ (note that Algorithm~\ref{sepnmf} does for $\delta \leq 0.05$).

\item In Exp.\@~4, PPI is not able to identify all the columns of $W$ because of the repetition, while, as opposed to Exp.\@~2,  VCA is not robust to any noise because of ill-conditioning.

\item Algorithm~\ref{sepnmf} is the fastest algorithm although PPI and SiVM have roughly the same computational time. VCA is slower as it uses PCA as a preprocessing; see Table~\ref{ct}. 
\begin{table}[ht!]
\begin{center}
\caption{Average computational time (s.) for the different algorithms.} 
\begin{tabular}{|c||c|c|c|c|}
\hline
					& Exp. 1   &   Exp. 2 &  Exp. 3 & Exp. 4  \\ 
\hline \hline
Algorithm 1		&	 0.0080  & 0.0087  	 & 	 0.0086	&  0.0086 \\ 
PPI   			 &	 0.052  & 0.051 & 	0.049 	&  0.054 \\
VCA   				 &	 2.69  & 0.24 & 	2.71 	& 1.41 \\ 
SiVM   			 & 0.025	   & 0.027 & 	0.028 	&  0.027 \\
\hline  
\end{tabular}
\label{ct}
\end{center}
\end{table}

\end{itemize}

These experiments also show that the error bound derived in Theorem~\ref{threc} is rather loose, which can be partly explained by the fact that our analysis considers the worst-case scenario (while our experiments use either a structured noise or Gaussian noise). 
Recall that the value of $\epsilon$ in Theorem~\ref{threc} is the smallest value such that $||n_i||_2 \leq \epsilon$ for all $i$; see Equation~\eqref{sepnmfbound}.  
Table~\ref{experi} gives the average value of the maximum norm of the columns of $N$ for each experiment. 
Based on these values, the first row of Table~\ref{errb} shows the average upper  bound for $\delta$ to guarantee recovery; see Theorem~\ref{threc}.
\begin{table}[ht!]
\begin{center}
\caption{Comparison of the average value of $\delta$ predicted by Theorem~\ref{threc}  
to guarantee recovery 
compared to the observed values.} 
\begin{tabular}{|c||c|c|c|c|}
\hline
					& Exp. 1   &   Exp. 2 &  Exp. 3 & Exp. 4  \\ 
\hline   
Th.~\ref{threc} 
& $\left. 3.7^{\ast}10^{-5} \right.$ & $7^{\ast}10^{-6}$ 	 & $6.7^{\ast}10^{-12}$   &  $1.2^{\ast}10^{-13}$ \\ 
Observed   &	0.259   &  0.238   & 	 0.011   	&  $1.74^{\ast}10^{-4}$ \\ 
\hline  
\end{tabular}
\label{errb}
\end{center}
\end{table}

\subsection{Comparison with the Algorithm of Bittorf et al.\@ \cite{BRRT12}} \label{compaB}

In this section, we compare the Algorithm of Bittorf et al.\@  (BRRT) (Algorithm 3 in {\cite{BRRT12}; see also Algorithm~2 in~\cite{G13})\footnote{We do not perform a comparison with the algorithm of Arora et al.\@ \cite{AGKM11} as it is not very practical (the value of $\alpha$ has to be estimated, see Section~\ref{compa}) and has already been shown to perform similarly as BRRT in \cite{BRRT12}.} with Algorithm~\ref{sepnmf}. 
BRRT has to solve a linear program with $\mathcal{O}(n^2)$ variables which we solve using CVX~\cite{cvx}. Therefore we are only able to solve small-scale problems (in fact, CVX uses an interior-point method): we perform {exactly} the same experiments as in the previous section but for $m=10$ and $r=5$ for all experiments, so that  
\begin{itemize}
\item $n = 5 + \binom{5}{2} = 15$ for the first and third experiments (the last ten columns of $M$ are the middle points of the five columns of $W$). 
\item $n = 5 + 5 + 10 = 20$ for the second and fourth experiments (we repeat twice each endmember, and add 10 points in the convex hull of the columns of $W$). 
\end{itemize}
The average running time for BRRT on these data sets using CVX \cite{cvx} is about two seconds while, for Algorithm~\ref{sepnmf}, it is less than $10^{-3}$ seconds. (Bittorf et al.~\cite{BRRT12} propose a more efficient solver than CVX for their LP instances.  As mentioned in Section~\ref{compa}, even with their more efficient solver, Algorithm~\ref{sepnmf} is much faster for large $n$.) Figure~\ref{expdelb} shows the percentage of correctly extracted columns with respect to~$\delta$, while Table~\ref{robustn2} shows the robustness of both algorithms. 
\begin{table}[ht!]
\begin{center}
\caption{Robustness: maximum values of $\delta$ for perfect recovery.} 
\begin{tabular}{|c||c|c|c|c|}
\hline
					& Exp. 1   &   Exp. 2 &  Exp. 3 & Exp. 4  \\ 
\hline \hline
Algorithm~\ref{sepnmf}	 & \textbf{0.070}  &  $1.4^{\ast}10^{-6}$   	 &  $7.9^{\ast}10^{-4}$	  	&  $\mathbf{2.4^{\ast}10^{-6}}$ \\ 
BRRT \cite{BRRT12}  & 	0.042	   &  $\mathbf{5.5^{\ast}10^{-6}}$   & $\mathbf{1.8^{\ast}10^{-3}}$  	&   $10^{-6}$ \\\hline  
\end{tabular}
\label{robustn2}
\end{center}
\end{table}
\begin{figure}[ht!]
\begin{center}
\includegraphics[width=7.5cm]{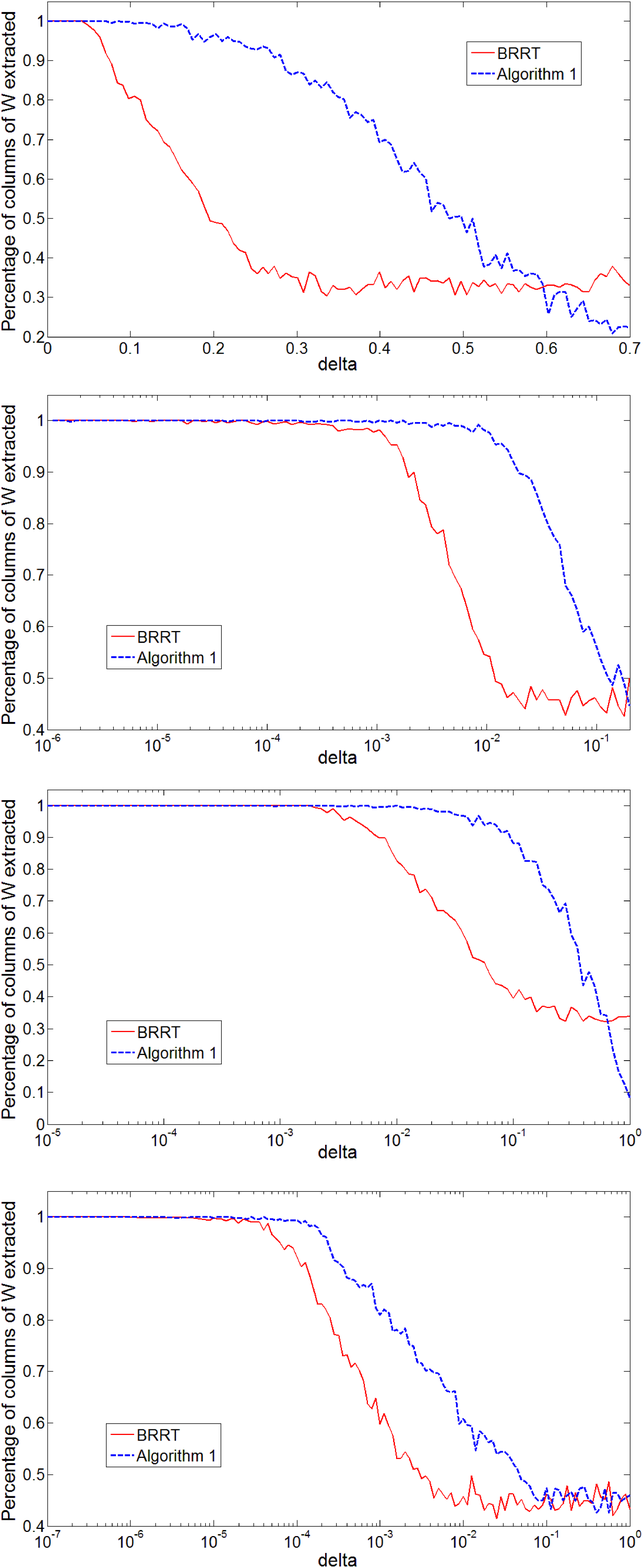}
\caption{Comparison of Algorithm~\ref{sepnmf} and BRRT \cite{BRRT12}. From top to bottom: Exp.\@~1, Exp.\@~2, Exp.\@~3, and Exp.\@~4.} 
\label{expdelb}
\end{center}
\end{figure} 

Quite surprisingly, Algorithm~\ref{sepnmf} performs in average better than BRRT. Although BRRT is more robust in two of the four experiments (that is, it extracts correctly all columns of $W$ for a larger value of $\delta$), the percentage of columns it is able to correctly extract decreases much faster as the noise level increases. For example, Table~\ref{robustn3} shows the maximum value of $\delta$ for which 99\% percent of the columns of $W$ are correctly extracted. In that case, Algorithm~\ref{sepnmf} always performs better. 
\begin{table}[ht!]
\begin{center}
\caption{Maximum values of $\delta$ for recovering 99\% of the columns of $W$.} 
\begin{tabular}{|c||c|c|c|c|}
\hline
					& Exp. 1   &   Exp. 2 &  Exp. 3 & Exp. 4  \\ 
\hline \hline
Algorithm~\ref{sepnmf}	& 0.126  &  	$2.8^{\ast}10^{-3}$ &  $1.2^{\ast}10^{-2}$ 	  	&  $10^{-4}$ \\ 
BRRT \cite{BRRT12}  & 0.05	   & $4.0^{\ast}10^{-4}$   & $2.2^{\ast}10^{-3}$  	&  $1.8^{\ast}10^{-5}$   \\\hline  
\end{tabular}
\label{robustn3}
\end{center}
\end{table}
A possible explanation for this behavior is that, when the noise is too large, the condition for recovery are not satisfied as the input matrix is far from being separable.  However, using Algorithm~\ref{sepnmf} still makes sense as it extracts columns whose convex hull has large volume \cite{AM09,AM10} while it is not clear what BRRT does in that situation (as it heavily relies on the separability assumption). Therefore, although BRRT guarantees perfect recovery for higher noise levels, it appears that, in practice, when the noise level is high, Algorithm~\ref{sepnmf} is preferable.

\section{Conclusion and Further Work}

In this paper, we have introduced and analyzed a new family of fast and robust recursive algorithms for separable NMF problems which are equivalent to hyperspectral unmixing problems under the linear mixing model and the pure-pixel assumption. This family generalizes several existing hyperspectral unmixing algorithms, and our analysis provides a theoretical framework to explain the better performances of these approaches. 
In particular, our analysis explains why algorithms like PPI and VCA are less robust against noise compared to Algorithm 1. 

Many questions remain open, and would be interesting directions for further research: 
\begin{itemize}

\item Is it possible to provide  better error bounds for Algorithm~\ref{sepnmf} than the ones of Theorem~\ref{threc}? In other words, is our analysis tight? Also, can we improve the bounds if we assume specific generative and/or noise models? 

\item How can we choose appropriate functions $f(x)$ for Algorithm~\ref{sepnmf} depending on the input data matrix? 

\item Can we design other robust and fast algorithms for the separable NMF problem leading to better error bounds? 

\end{itemize}

 \section*{Acknowledgments}
 
  The authors would like to thank the reviewers for their feedback which helped improve the paper significantly.


\bibliographystyle{spmpsci}  
\bibliography{fastNMF}

\end{document}